\newtheorem{lemma}{Lemma}
\newtheorem{theorem}{Theorem}
\newtheorem{definition}{Definition}
\title{Provably Robust Blackbox Optimization\\ for Reinforcement Learning}
\author{%
  Krzysztof Choromanski\thanks{Equal contribution.}\\
  Google Brain Robotics\\
   \texttt{kchoro@google.com} \\
   \And
   Aldo Pacchiano\footnotemark[1]\\
  UC Berkeley\\
   \texttt{pacchiano@berkeley.edu} \\
   \And
  Jack Parker-Holder\footnotemark[1]\\
  Columbia University\\
   \texttt{jh3764@columbia.edu} \\
   \And
  Yunhao Tang\\
  Columbia University\\
   \texttt{yt2541@columbia.edu} \\
   \And
   Deepali Jain \\
   Google Brain Robotics\\
   \texttt{jaindeepali@google.com}\\
   \And
   Yuxiang Yang \\
   Google Brain Robotics\\
   \texttt{yxyang@google.com}\\
   \And
   Atil Iscen \\
   Google Brain Robotics\\
   \texttt{atil@google.com}\\
   \And
  Jasmine Hsu\\
  Google Brain Robotics\\
   \texttt{hellojas@google.com} \\
   \And
   Vikas Sindhwani\\
  Google Brain Robotics\\
   \texttt{sindhwani@google.com} \\
   \\[-50.0ex]
   }
\begin{document}
\maketitle

%===============================================================================

\begin{abstract}
Interest in derivative-free optimization (DFO) and ``evolutionary strategies" (ES) has recently surged in the Reinforcement Learning (RL) community, with growing evidence that they can match state of the art methods for policy optimization problems in Robotics.  However, it is well known that DFO methods suffer from prohibitively high sampling complexity. They can also be very sensitive to noisy rewards and stochastic dynamics. In this paper, we  propose a new class of algorithms, called Robust Blackbox Optimization (RBO).
Remarkably, even if up to $23\%$ of all the measurements are arbitrarily corrupted, RBO can provably recover gradients to high accuracy. RBO relies on learning gradient flows using robust regression methods to enable off-policy updates. On several  $\mathrm{MuJoCo}$ robot control tasks, when all other RL approaches collapse in the presence of adversarial noise, RBO is able to train policies effectively. We also show that RBO can be applied to legged locomotion tasks including path tracking for quadruped robots.
\end{abstract}

% Several attempts were made to modify ES algorithms in order to reduce % their sampling complexity, (such as improving their exploration via % carefully chosen ensambles of perturbation directions \cite{}), providing % only very moderate gains so far.

% Two or three meaningful keywords should be added here
%\keywords{Derivative-free Optimization, Reinforcement Learning, Evolutionary Strategies, Policy Search} 

%===============================================================================

\section{Introduction}

%Interest in Blackbox (aka Derivative-free) Optimization [DFO], \cite{conn2009introduction}) \& Evolutionary strategies (ES, \cite{ES}) has recently surged, with growing evidence that they may be a competitive alternative to state of the art policy gradients methods \cite{schulman2017proximal}, \cite{schulman2015trust}, \cite{lillicrap2015continuous} in Reinforcement Learning tasks.  They are simpler to implement, do not rely on the specific internal structure of the MDP problem (thus can be used in general blackbox optimization) and often outperform the aforementioned.

It is appealing to reduce policy learning tasks arising in Robotics to instances of  blackbox optimization problems of the form, 
\begin{equation}
\max_{\theta \in \mathbb{R}^{d}} F(\theta).
\label{eq:blackbox}
\end{equation}
Above, $\theta$ encodes a policy $\pi_\theta:\mathcal{S} \rightarrow \mathcal{A}$, where $\mathcal{S}$ and $\mathcal{A}$ denote the state and action spaces, and the function $F$ maps $\theta$ to the total expected reward when the robot applies  $\pi_\theta$ recursively in a given environment. In this context, the ``blackbox" is an opaque physics simulator or even the robot hardware interacting with a real environment with unknown dynamics. As a consequence, the function $F$ only admits point evaluations with no explicit analytical gradients available for an optimizer to exploit. 

Blackbox methods and the so-called ``evolutionary strategies" (ES) are instances of derivative-free optimization (DFO) \cite{ES, wierstra14a, horia, stanley, seha} that aim to maximize $F$ by applying various random search~\cite{kolda2003optimization} techniques, while avoiding explicit gradient computation.   Typically, in each epoch the policy parameter vector $\theta$ is updated using a gradient ascent rule that has the following general flavor~\cite{ES,stockholm}:
\begin{equation}
\label{base-equation}
\theta \leftarrow \theta + \eta \widehat{\nabla F}(\theta),~~~\textrm{where}~~~\widehat{\nabla F}(\theta)\approx \frac{1}{l} \sum_{j=1}^{l} w(\theta, \mathbf{g}_{i}) \mathbf{g}_{i},
\end{equation}
and the gradient of $F$ at $\theta$ is estimated by evaluating $F$ at $\theta \pm \mathbf{g}_{i}$ for a certain choice of perturbation directions $\{\mathbf{g}_{1},...,\mathbf{g}_{l}\}$.
%where $\{\mathbf{g}_{1},...,\mathbf{g}_{l}\}$ is a subset of the set of $k$ chosen random perturbations/samples  $\{\mathbf{g}_{1},...,\mathbf{g}_{k}\}$ (for some $l \leq k$) 
%defining perturbed versions $\theta \pm \mathbf{g}_{i}$ of the given policy $\theta$. 
Above, the function $w:\mathbb{R} \rightarrow \mathbb{R}$ translates rewards obtained by the perturbed policies to some weights and $\eta>0$ is a step size. For example, the setting $w(\theta, \mathbf{g}) = \frac{1}{h}[F(\theta + h\mathbf{g}) - F(\theta)]$ where $\mathbf{g}$'s are the canonical directions, corresponds to the ubiquitous finite difference gradient estimator.

Surprisingly, despite not exploiting the internal structure of the RL problem, blackbox methods can be highly competitive with  state of the art policy gradient approaches \cite{schulman2017proximal, schulman2015trust, lillicrap2015continuous, babadi}, 
while admitting much simpler and embarrassingly parallelizable implementations. They can also handle complex, non-differentiable policy parameterizations, non-markovian reward structures and non-smooth hybrid dynamics. Particularly in simulation settings, they remain a serious alternative to classical model-free RL methods despite being among the simplest and oldest policy search techniques~\cite{shu2006numerical,kohl2004policy,peters2006policy}.

On the flip side, blackbox methods are notorious for requiring a prohibitively large number of rollouts.  This is because these  methods are exclusively on-policy and extract a relatively small amount of information from samples, compared to other model-free RL algorithms. The latter make use of the underlying structure (e.g. Markovian property) to derive updates, in particular off-policy methods which maintain and re-use previously collected data \cite{dqn2013}.  Indeed, the ES approach of Salimans et al. \cite{ES} required millions of rollouts on thousands of CPUs to get competitive results. 

Furthermore, several theoretical results expose a fundamental and
unavoidable gap between the performance of optimizers with access to gradients
and those with access to only function evaluations, particularly in the presence of noise~\cite{jamieson2012query}. Even when the blackbox is a convex function, blackbox methods usually need more iterations than the standard gradient methods to converge, at a price that scales with problem dimensionality~\cite{nesterov}. Without considerable care, they are also brittle in the face of noise and can breakdown when rewards are noisy or there is considerable stochasticity in the underlying system dynamics.  

%where different workers compute rewards obtained by different perturbed policies independently. However, despite the wall-clock advantage, ES methods notoriously require a large number of long-horizon rollouts, which quickly becomes a computational bottleneck. For instance, Salimans et al. \cite{ES} require thousands of CPUs to get competitive results. This is because ES methods extract a relatively small amount of information from samples, compared to other model-free RL algorithms which make use of the underlying structure (e.g. Markovian property) to derive updates, in particular off-policy methods which maintain and re-use previously collected data \cite{dqn2013}. 

%Furthermore, policies obtained by updates as in Equation \ref{base-equation}, are very sensitive to noisy measurements, e.g. when the dynamics model used in the simulator does not accurately represent the true dynamics in certain regions of the state space. This is a crucial issue for RL application to real world robotics.

Starting from a natural regularized regression perspective on gradient estimation, we propose two simple enhancements to blackbox/ES techniques. First, we inject off-policy learning by reusing past samples to estimate an entire continuous local gradient field in the neighborhood of the current iterate. Secondly, by drawing on results from compressed sensing and error correcting codes~\cite{dwork2007price,candes2005decoding}, we propose a robust regression LP-decoding framework that is guaranteed to provide provable accurate gradient estimates in the face of up to $23\%$ arbitrary noise, including adversarial corruption, in function evaluations. The resulting method (abbreviated as RBO) shows dramatic resilience to massive measurement corruptions on a suite of $8$ $\mathrm{MuJoCo}$ robot control tasks when competing algorithms appear to fall apart. We also observe favorable comparisons on walking and path tracking tasks on quadruped robots.

We start this paper with a simple regression perspective on blackbox optimization, introduce our algorithm and its off-policy elements with a striking theoretical result on its robustness, followed by a comprehensive empirical analysis on a variety of policy search problems in Robotics.

\section{A Regularized Regression Perspective on Gradient Estimation}

%In order to introduce the regression point of view on ES optimization,
We begin by presenting a natural regression perspective on gradient estimation~\cite{conn2009introduction} for derivative-free optimization. First, recall the notion of a Gaussian smoothing $F_\sigma$ of a given blackbox function $F$ defined as, 
\begin{align}
\begin{split}
F_{\sigma}(\theta) = \mathbb{E}_{\mathbf{g} \in \mathcal{N}(0,\mathbf{I}_{d})}[F(\theta + \sigma \mathbf{g})] = 
(2\pi)^{-\frac{d}{2}}
\int_{\mathbb{R}^{d}}F(\theta + \sigma \mathbf{g})e^{-\frac{\|\mathbf{g}\|^{2}}{2}}d\mathbf{g}.
\end{split}
\end{align}

It turns out that the updates proposed in the Evolutionary Strategies approach of Salimans et. al.\cite{ES} can be written as:
\begin{equation}
\theta \leftarrow \theta + \eta 
\widehat{\nabla}_{\mathrm{MC}}F_{\sigma}(\theta),
\end{equation}
where $\widehat{\nabla}_{\mathrm{MC}}F_{\sigma}(\theta)$
is the Monte Carlo (MC) estimator of the gradient 
$\nabla_{\mathrm{MC}}F_{\sigma}(\theta)$ of $F_{\sigma}$ at $\theta$. 

Since the formula for the gradient $\nabla_{\mathrm{MC}}F_{\sigma}(\theta)$ is itself given as an expectation over Gaussian distribution, namely: 
$\nabla F_{\sigma}(\theta)=\frac{1}{\sigma}\mathbb{E}_{\mathbf{g} \sim \mathcal{N}(0,\mathbf{I}_{d})}[F(\theta + \sigma \mathbf{g})\mathbf{g}]$, MC estimators can be easily constructed, simply by sampling $k$ independent Gaussian perturbations $\sigma \mathbf{g}_{i}$ for $i=1,...,k$ and evaluating $F$ at points determined by these perturbations. There exist several such unbiased MC estimators which apply different variance reduction techniques \cite{stockholm,montreal,choromanski2019unifying,tang2019variance}. Without loss of generality, we take an estimator using \textit{forward finite difference} expressions \cite{stockholm} which is of the form:
\begin{equation}
\widehat{\nabla}^{\mathrm{AT}}_{\mathrm{MC}}F_{\sigma}(\theta) = 
\frac{1}{k\sigma}\sum_{i=1}^{k}(F(\theta + \sigma \mathbf{g}_{i})-F(\theta))\mathbf{g}_{i}.
\end{equation}    

One can notice by analyzing the Taylor expansion of $F$ at $\theta$ that forward finite difference expressions $\frac{F(\theta + \sigma \mathbf{g}_{i})-F(\theta)}{\sigma}$ in the formula above can be reinterpreted as estimations of the dot-products $\nabla F(\theta)^T \mathbf{g}_{i}$.
In other words, by querying blackbox RL function $F$ at $\theta$ with perturbations $\sigma \mathbf{g}_{i}$, one effectively collects lots of noisy estimates of $\nabla F(\theta)^T \mathbf{g}_{i}$.  

The task then is to recover the unknown gradient from these estimates. This observation is the key to formulating blackbox function gradient estimation as a regression problem. This approach has two potential benefits. Firstly, it opens blackbox optimization to the wide class of regularized regression-based methods capable of recovering gradients more accurately than standard MC approaches in the presence of substantial noise. Secondly,  it relaxes the independence condition regarding samples chosen in different iterations of the optimization, allowing for samples from previous iterations to be re-used.

As we will see later, the latter will eventually lead to more sample efficient methods. Interestingly, we will show that the recent orthogonal method for variance reduction in ES\cite{stockholm} can be interpreted as a particular instantiation of the regression-based approach.

%Interestingly, we will show that some of the recent methods for improving ES and based on orthogonally-coupled perturbations leading to variance-reduced Quasi Monte-Carlo estimators \cite{stockholm} can be thought of as particular instantiations of the regression-based approach.

\subsection{A simple regression-based algorithm}

Given scalars $\{F(\theta+\mathbf{z}_{i})\}_{i=1}^{k}$ (corresponding to rewards obtained by different perturbations $\mathbf{z}_i$ of the policy encoded by $\theta$), we formulate the 
regression problem by considering $\{\mathbf{z}_{1},...,\mathbf{z}_{k}\}$ as input vectors with target values $y_i = F(\theta+\mathbf{z}_{i}) - F(\theta)$ for $i=1,...,k$. We propose to produce a gradient estimator by solving the following regression problem:
%handle this regression task by solving the following minimization problem:
\begin{equation}
\widehat{\nabla}_{\mathrm{RBO}}F(\theta) =  
\arg \min_{\mathbf{v} \in \mathbb{R}^{d}} \frac{1}{2k}\|\mathbf{y} - \mathbf{Z}\mathbf{v}\|^{p}_{p} + \alpha \|\mathbf{v}\|^{q}_{q},  
\label{eq:regression}
\end{equation}
where $p, q \geq 1$, $\mathbf{Z} \in \mathbb{R}^{k \times d}$ is a matrix with the $i$th row encoding perturbations $\mathbf{z}_{i}$. The sequences of rows in $\mathbf{Z}$ are 
sampled from some given joint multivariate distribution $\mathbb{P} \in \mathcal{P}(\mathbb{R}^{d} \times ... \mathbb{R}^{d})$ and $\alpha > 0$ is a regularization parameter.

As already mentioned, perturbations $\mathbf{z}_{i}$ do not need to be taken from the Gaussian multivariate distribution and 
they do not even need to be independent.
Note that various known regression methods arise by instantiating the above optimization problems with different values of $p, q$ and $\alpha$.
In particular, $p=q=2$ leads to the ridge regression  \cite{regression}, $p=2$, $q=1$ to Lasso \cite{lasso}, $p=1$, $q=2$ to robust regression with least absolute deviations loss, and and $p=1$, $\alpha=0$ to LP decoding \cite{dwork2007price}. 

\subsection{Using off-policy samples} 
Gradients reconstructed by the above regression problem (Equation \ref{eq:regression}) can be given to the ES optimizer. Furthermore, at any given iteration $t$ the ES optimizer can reuse evaluations of these points $\theta_{t-1} + \sigma \mathbf{g}_{i}^{(t-1)}$ that are closest to current point $\theta_{t}$, .e.g. top $\tau$-percentage for the fixed hyperparameter $0 < \tau < 1$. Thus the regression interpretation enables us to go beyond the rigid framework of independent sets of samples. This algorithm, described in more detail in Algorithm 1 box (l.8 in the algorithm is a simple projection step restricting each parameter vector to be within a domain of allowable policies), plays the role of our base RBO variant and the backbone of our algorithmic approach. It already outperforms state-of-the-art ES methods on benchmark RL tasks. In the next section we will explain how it can be further refined to achieve even better performance.  

\subsection{Regression versus ES with orthogonal MC estimators}

Here we show that ES methods based on orthogonal Monte Carlo estimators \cite{stockholm, montreal}, that were recently demonstrated to improve standard ES algorithms for RL, can be thought of as special cases of the regression approach.

Orthogonal MC estimators rely on pairwise orthogonal perturbations $\sigma \mathbf{g}_{i}$ that can be further renormalized to have length $l=\sigma \sqrt{d}$. The renormalization ensures that the marginal distributions of the orthogonal samples are the same as the unstructured ones, which render the orthogonal MC estimators unbiased. Further, the coupling induces correlation between perturbations for provable variance reduction.

\begin{algorithm}[H]
\textbf{Input:} $F : \Theta \rightarrow \mathbb{R}$, scaling parameter sequence $\{\sigma \}_t$, initial $\theta_0 = \mathbf{u}_0 \in \Theta$, number of perturbations $k$, step size sequence $\{ \eta_t \}_t$, sampling distribution $\mathbb{P} \in \mathcal{P}(\mathbb{R}^{d})$, parameters $p,q,\alpha, \tau$, number of epochs $T$.  \; \\
\textbf{Output:} Vector of parameters $\theta_{T}$. \; \\
1. Initialize $\Theta^{\mathrm{pert}}_{\mathrm{old}} = \emptyset$, $R_{\mathrm{old}} = \emptyset$ ($|\Theta^{\mathrm{pert}}_{\mathrm{old}}|=|R_{\mathrm{old}}|$). \; \\
\For{$t=0, 1, \ldots, T-1$}{
  1. Compute all distances from $\mathbf{u}_{t}$ to $\theta^{\mathrm{pert}}_{\mathrm{old}} \in \Theta^{\mathrm{pert}}_{\mathrm{old}}$. \; \\
  2. Find the closest $\tau$-percentage of vectors from $\Theta^{\mathrm{pert}}_{\mathrm{old}}$ and call this set $\Theta^{\mathrm{near}}_{\tau}$. Call the corresponding subset of $R_{\mathrm{old}}$ as $R^{\mathrm{near}}_{\tau}$. \; \\
  3. Sample $\mathbf{g}^{(t)}_1, \cdots, \mathbf{g}^{(t)}_{k-|\Theta^{\mathrm{near}}_{\tau}|}$ from $\mathbb{P}$. \;\\
  4. Compute $F(\theta_t)$ and $F(\theta_t + \sigma_t \mathbf{g}^{(t)}_j)$ for all $j$. \; \\
  5. Let $\mathbf{Z}_t \in \mathbb{R}^{k \times d}$ be a matrix 
  obtained by concatenating rows given by $\sigma_t \times \mathbf{g}^{(t)}_i$ and those 
  of the form: $\mathbf{p}_{i}-\theta_{t}$, where $\mathbf{p}_{i} \in \Theta^{\mathrm{near}}_{\tau}$.\; \\
  6. Let $\mathbf{y}_t \in \mathbb{R}^k$ be the vector obtained by concatenating values $F(\theta_t + \sigma_t \mathbf{g}^{(t)}_j) - F(\theta_t)$ with those
  of the form: $r_{i} - F(\theta_{t})$, where $r_{i} \in R^{\mathrm{near}}_{\tau}$. \; \\
  7. Let $\widehat{\nabla}_{\mathrm{RBO}}F(\theta_t)$ be the resulting vector after solving the following optimization problem:
  \begin{align*}
    \widehat{\nabla}_{\mathrm{RBO}}F(\theta_t) =  
    \arg \min_{\mathbf{v} \in \mathbb{R}^{d}} \frac{1}{2k}\|\mathbf{y}_{t} - \mathbf{Z}_{t}\mathbf{v}\|^{p}_{p} + \alpha \|\mathbf{v}\|^{q}_{q}, 
  \end{align*}\;\\
  8. Take $\mathbf{u}_{t+1} = \theta_t + \eta_t \widehat{\nabla}_{\mathrm{RBO}}F(\theta_t)$ \; \\
  9. Take $\theta_{t+1} = \arg\max_{\theta \in \Theta} \langle \theta, \mathbf{u}_{t+1} \rangle - \frac{1}{2}\| \theta \|_2^2$.\;\\  
  10. Update $\Theta^{\mathrm{pert}}_{\mathrm{old}}$ to be the set
  of the form $\theta_{t} + \mathbf{z}_{i}$, where $\mathbf{z}_{i}$s are rows of $\mathbf{Z}_{t}$
  and $\theta_{t}$, and $R_{\mathrm{old}}$ to be the set of the corresponding values $F(\theta_{t}+\mathbf{z}_{i})$ and $F(\theta_{t})$.
 }
 \caption{Robust Blackbox Optimization Algorithm via Regression}
\label{Alg:tr-robust_gradient_recovery}
\end{algorithm}

Orthogonal MC estimators can be easily constructed via Gram-Schmidt orthogonalization process from the ensembles of unstructured independent samples \cite{kmchoro}. The following is true:

\begin{lemma}
The class of the orthogonal Monte Carlo estimators using renormalization with $k=d$ orthogonal samples
is equivalent to particular sub-classes of $\mathrm{RBO}$ estimators with $p=q=2$.
\end{lemma}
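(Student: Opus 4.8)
The plan is to use the closed-form solution of ridge regression, which is the $p=q=2$ instance of the problem in Equation~\ref{eq:regression}, together with the algebraic structure of renormalized orthogonal ensembles. First I would write the $p=q=2$ estimator explicitly: setting the gradient of $\frac{1}{2k}\|\mathbf{y}-\mathbf{Z}\mathbf{v}\|_2^2 + \alpha\|\mathbf{v}\|_2^2$ to zero gives the normal equations $(\mathbf{Z}^\top\mathbf{Z} + 2k\alpha\,\mathbf{I}_d)\mathbf{v} = \mathbf{Z}^\top\mathbf{y}$, so that $\widehat{\nabla}_{\mathrm{RBO}}F(\theta) = (\mathbf{Z}^\top\mathbf{Z} + 2k\alpha\,\mathbf{I}_d)^{-1}\mathbf{Z}^\top\mathbf{y}$, where the rows of $\mathbf{Z}$ are the perturbations $\mathbf{z}_i = \sigma\mathbf{g}_i$ and $y_i = F(\theta + \sigma\mathbf{g}_i) - F(\theta)$; here no off-policy samples are reused ($\tau = 0$), so $\mathbf{Z}$ is square.

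Next I would substitute the structure of the orthogonal ensemble. Writing $\mathbf{G} \in \mathbb{R}^{d\times d}$ for the matrix whose $i$-th row is $\mathbf{g}_i^\top$, the defining properties of the renormalized orthogonal ensemble --- pairwise orthogonality and $\|\mathbf{g}_i\|_2 = \sqrt{d}$ for all $i$ --- say exactly that $\frac{1}{\sqrt d}\mathbf{G}$ is an orthogonal matrix; hence $\mathbf{G}^\top\mathbf{G} = \mathbf{G}\mathbf{G}^\top = d\,\mathbf{I}_d$ and $\mathbf{Z}^\top\mathbf{Z} = \sigma^2 d\,\mathbf{I}_d$ is a scalar multiple of the identity. (It is precisely the renormalization that makes this a multiple of $\mathbf{I}_d$ rather than a general positive-definite matrix.) This collapses the matrix inverse to a scalar and yields
\[
\widehat{\nabla}_{\mathrm{RBO}}F(\theta) = \frac{\sigma}{\sigma^2 d + 2k\alpha}\,\mathbf{G}^\top\mathbf{y} = \frac{\sigma}{\sigma^2 d + 2k\alpha}\sum_{i=1}^{d}\bigl(F(\theta + \sigma\mathbf{g}_i) - F(\theta)\bigr)\mathbf{g}_i .
\]

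Finally, setting $k = d$ I would recognize the right-hand side as $\frac{\sigma^2}{\sigma^2 + 2\alpha}$ times the renormalized orthogonal forward-finite-difference estimator $\widehat{\nabla}^{\mathrm{AT}}_{\mathrm{MC}}F_\sigma(\theta) = \frac{1}{d\sigma}\sum_{i=1}^{d}(F(\theta + \sigma\mathbf{g}_i) - F(\theta))\mathbf{g}_i$. Since the two estimators coincide up to a fixed positive multiplicative constant, they generate identical iterates in Algorithm~\ref{Alg:tr-robust_gradient_recovery} once that constant is absorbed into the step-size sequence $\{\eta_t\}_t$ (and letting $\alpha \downarrow 0$ makes them literally equal), which identifies the relevant RBO sub-class: $p = q = 2$, $k = d$, $\tau = 0$, and $\mathbb{P}$ the renormalized orthogonal ensemble. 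I expect the only delicate point to be this last bookkeeping --- stating the equivalence modulo the global rescaling that the learning rate can undo, and being explicit that squareness of $\mathbf{Z}$ (no sample reuse) is what makes $\mathbf{Z}^\top\mathbf{Z}$ proportional to $\mathbf{I}_d$.
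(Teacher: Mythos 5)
Your proposal is correct and follows essentially the same route as the paper's proof: write the ridge closed form $(\mathbf{Z}^{\top}\mathbf{Z}+2d\alpha\mathbf{I}_d)^{-1}\mathbf{Z}^{\top}\mathbf{y}$, use orthogonality plus renormalization to reduce $\mathbf{Z}^{\top}\mathbf{Z}$ to $\sigma^2 d\,\mathbf{I}_d$, identify the result as $\tfrac{\sigma^2}{\sigma^2+2\alpha}$ times the orthogonal forward-finite-difference MC estimator, and absorb that constant into the step size. Your added remarks on $\tau=0$ and the squareness of $\mathbf{Z}$ are sensible bookkeeping that the paper leaves implicit.
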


\begin{proof}
The solution to the ridge regression problem for gradient estimation ($p=q=2$) 
%\begin{equation}       
%\arg \min_{\mathbf{v} \in \mathbb{R}^{d}} \frac{1}{2d}\|\mathbf{y}_{t} %- \mathbf{Z}_{t}\mathbf{v}\|^{2}_{2} + \alpha \|\mathbf{v}\|^{2}_{2}
%\end{equation}
is of the form
\begin{equation}
\widehat{\nabla}_{\mathrm{RBO}}F_{\mathrm{ridge}}(\theta) = 
(\mathbf{Z}_{t}^{\top}\mathbf{Z}_{t} + 2d \alpha \mathbf{I}_{d})^{-1}\mathbf{Z}_{t}^{\top}\mathbf{y}_{t}
\end{equation}

By the assumptions of the lemma we get: $\mathbf{Z}_{t}\mathbf{Z}_{t}^{\top} = \sigma^{2}d \mathbf{I}_{d}$, thus $\mathbf{Z}_{t}^{\top} = \sigma^{2}d\mathbf{Z}_{t}^{-1}$, and we obtain:
\begin{equation}
\widehat{\nabla}_{\mathrm{RBO}}F_{\mathrm{ridge}}(\theta) = 
\frac{1}{d \sigma} \mathbf{G}_{\mathrm{ort}}^{\top}\mathbf{y}_{t}
\cdot \frac{\sigma^{2}}{\sigma^{2}+2\alpha},
\end{equation}
where $\mathbf{G}_{\mathrm{ort}}^{\top}$ is a matrix with rows given by orthogonal Gaussian vectors $\mathbf{g}_{i}^{\mathrm{ort}}$.
Thus, if we take $\sigma = \sigma_{\mathrm{MC}}$, where
$\sigma_{\mathrm{MC}}$ stands for the smoothing parameter in the MC estimator and furthermore, $\eta = \eta_{\mathrm{MC}}\frac{\sigma^{2}+2\alpha}{\sigma^{2}}$, where $\eta_{\mathrm{MC}}$ stands for the steps size in the algorithm using that MC estimator, then the RBO estimator is equivalent to the orthogonal MC and the proof is completed.
\end{proof}
\section{Learning Gradient Flows for off-policy sample reuse}

Algorithm 1 reconstructs the gradient of $F$ only at $\theta$. To refine this algorithm, consider reconstructing the gradient of $F$ at an entire continuous neighborhood of $\theta$ instead of $\theta$ alone. The idea is to use values of $F$ computed in the neighborhood $\mathcal{N}(\theta_{t})$ of $\theta_{t}$ to approximate the gradient field $\mathcal{F}_{\mathrm{grad}}$ of $F$ in the entire neighborhood $\mathcal{N}(\theta_{t})$ rather than just at $\theta_{t}$. This method utilizes past function evaluations $F$ to an even bigger extent. In this approach $k$ function values from iteration $t$ of the algorithm are used to reconstruct several gradients in the neighborhood $\mathcal{N}(\theta_{t})$ of $\theta_{t}$ as opposed to the baseline ES algorithm, where each value is used for only one gradient or Algorithm 1, where some values (from the closest $\tau$-percentage of the new point $\theta_{t+1}$) are reused. 

\vspace{-3mm}
\begin{figure}[H]
\begin{minipage}{1.0\textwidth}
	\subfigure[Vanilla ES]{\includegraphics[width=0.3\textwidth]{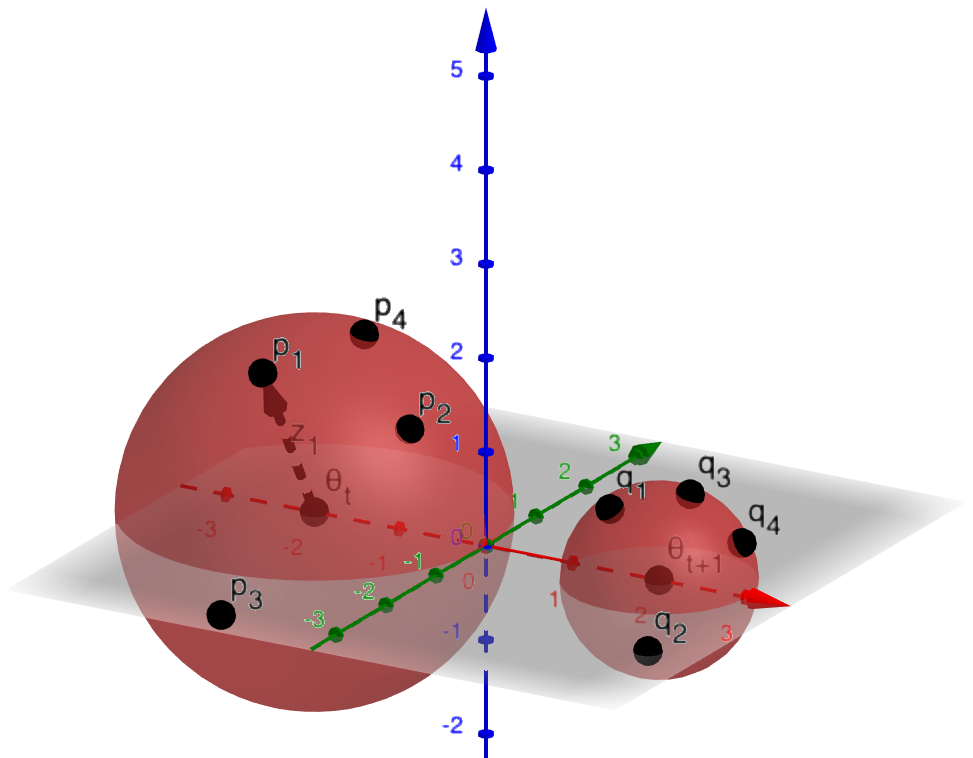}}
	\hspace{0.2cm}
	\subfigure[RBO]{\includegraphics[width=0.3\textwidth]{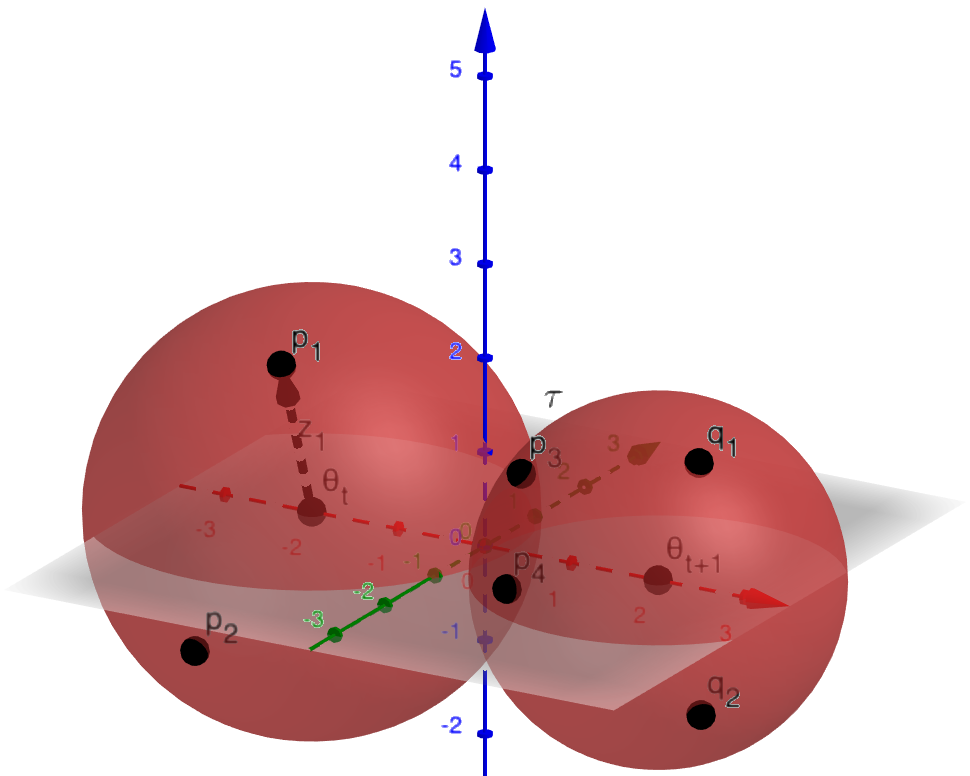}}	
	\hspace{0.3cm}	
	\subfigure[RBO + gradient flows]{\includegraphics[width=0.3\textwidth]{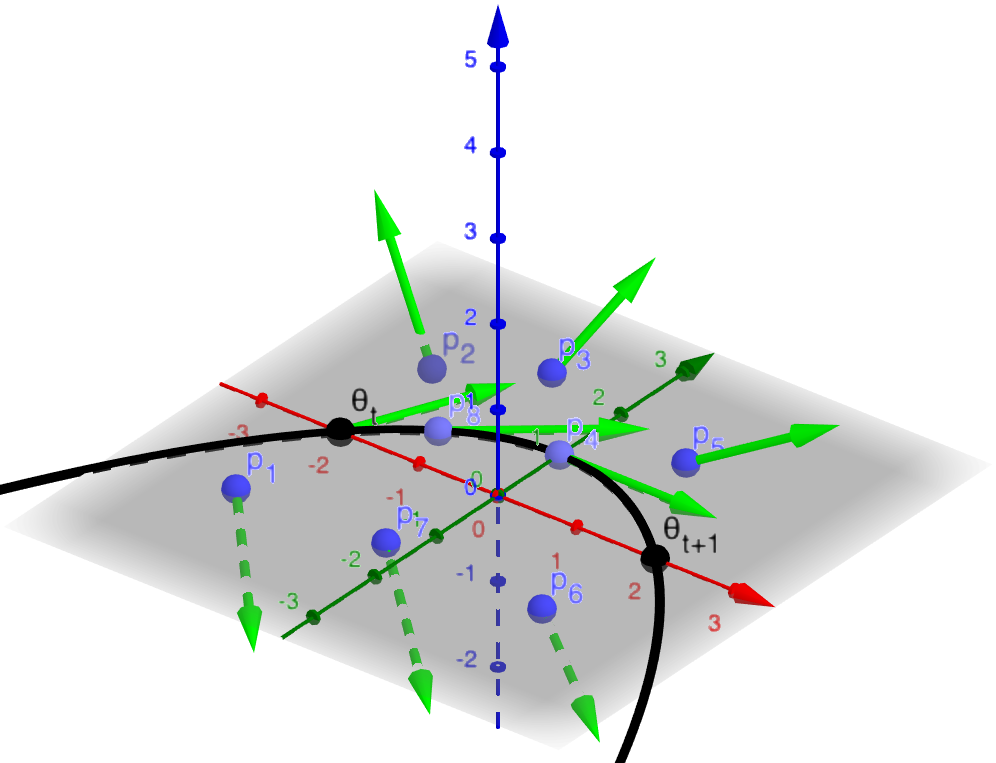}}		
\end{minipage}
	\caption{Comparison of different ES optimization methods: (a) Vanilla ES: to update current point $\theta_{t}$, independent perturbations $\mathbf{p}_{i}$ are chosen. Since perturbations are not reused, all perturbations $\mathbf{q}_{i}$ used in $\theta_{t+1}$ are different from the previous ones. (b) Base RBO: perturbations no longer need to be independent, the $\tau$-percentage of old perturbations closest to the new point $\theta_{t+1}$ are reused. Gradient in $\theta_{t}$ is reconstructed via regression. (c) RBO with gradient flows: gradients are recovered in several point of the neighborhood of $\theta_{t}$ via regression. An approximation of the gradient field in $\mathcal{N}(\theta_{t})$ is computed via matrix-valued kernel interpolation and the update of $\theta_{t}$ is conducted via gradient flow.}
	\label{fig:rbo_scheme} 
\end{figure}

% x1-cropped.png
% x2-cropped.png

The gradient at point $\theta_{t} + \sigma \mathbf{g}_{i}^{t}$ is reconstructed in the analogous way as in Algorithm 1, with the use of the estimator $\widehat{\nabla}_{\mathrm{RBO}}F(\theta_{t}+\sigma \mathbf{g}^{t}_{i})$, where data for the regressor consists of the same $k+1$ points as in Algorithm 1, namely: vector $\theta_{t}$ and vectors $\theta_{t} + \sigma \mathbf{g}_{j}^{t}$ for $j=1,...,k$. The only difference is that now $\theta_{t} + \sigma \mathbf{g}_{i}^{t}$ plays the role of the base vector and other $k$ vectors are interpreted as its perturbed versions. All the reconstructed gradients form a set $\widehat{\mathcal{F}}_{\mathrm{grad}}^{\mathrm{sparse}}$, which can be thought of as a sparse approximation of the gradient field $\mathcal{F}_{\mathrm{grad}}$ of $F$ in $\mathcal{N}(\theta_{t})$.

\subsection{Interpolating gradient field via matrix-valued kernels}

The set of gradients $\widehat{\mathcal{F}}_{\mathrm{grad}}^{\mathrm{sparse}}$ is used to create an interpolation $\widehat{\mathcal{F}}_{\mathrm{grad}}$ of the true gradient field
$\mathcal{F}_{\mathrm{grad}}$ in $\mathcal{N}(\theta_{t})$ via matrix-valued kernels. Below we give a short overview over the theory of matrix-valued kernels \cite{rosasco,pontil,reisert}, which suffices to explain how they can be applied for interpolation.

\begin{definition}[matrix-valued kernels]
A function $K:\mathbb{R}^{d} \times \mathbb{R}^{d} \rightarrow \mathbb{R}^{m} \times \mathbb{R}^{m}$ is a matrix-valued kernel if for every $\mathbf{x},\mathbf{y} \in \mathbb{R}^{d}$ the following holds:
\begin{equation}
K(\mathbf{x},\mathbf{y}) = K(\mathbf{y},\mathbf{x})^{\top}.    
\end{equation}
We call $K$ a \textit{positive definite kernel} if furthermore the following holds. For every set $\mathcal{X} = \{\mathbf{x}_{1},...,\mathbf{x}_{l}\} \subseteq \mathbb{R}^{d}$ the block matrix $K(\mathcal{X},\mathcal{X}) = (K(\mathbf{x}_{i},\mathbf{x}_{j}))_{i,j \in \{1,...,l\}}$ is positive definite.
\end{definition}

As we see, matrix-valued kernels are extensions of their scalar counterparts. As standard scalar-valued kernels can be used to approximate scalar fields via functions from reproducing kernel Hilbert spaces (RKHS) corresponding to these kernels, matrix-valued kernels can be utilized to interpolate vector-valued fields.
The interpolation problem can be formulated as:
\begin{equation}
\mathcal{F}_{*} = \mathrm{argmin} \sum_{j=1}^{m}\frac{1}{N}\sum_{i=1}^{N}(\mathcal{F}_{j}(\mathbf{x}_{i})-\mathbf{y}^{j}_{i})^{2} + \lambda\|\mathcal{F}\|^{2}_{\mathbf{K}}, 
\label{eq:matrixkernel}
\end{equation}
where $\mathcal{F}:\mathbb{R}^{d} \rightarrow \mathbb{R}^{m}$ is a vector-valued function from the RKHS corresponding to $\mathbf{K}$, scalar $\mathbf{y}^{j}_{i}$ is the $j^{th}$ component of the $i^{th}$ vector-valued observations $\mathbf{y}_{i} \in \mathbb{R}^{d}$ for the $i^{th}$ sample $\mathbf{x}_{i}$ from the set $\mathcal{X}$ of $N$ input datapoints and $\|\cdot\|_{\mathbf{K}}$ stands for the norm that the above RKHS is equipped with. The solution to the optimization problem (Equation \ref{eq:matrixkernel}) is given by the formula:
$\mathcal{F}(\mathbf{x}) = \sum_{i=1}^{N} K(\mathbf{x}_{i},\mathbf{x})\mathbf{c}_{i}$, where vectors $\mathbf{c}_{i} \in \mathbb{R}^{d}$ are given by:
\begin{equation}
\label{inverse}
\mathbf{c} = (K(\mathcal{X},\mathcal{X})+\lambda N \mathbf{I}_{Nd \times Nd})^{-1}\mathbf{y}.    
\end{equation} For separable matrix-valued kernels~\cite{sindhwani2013scalable}, such a problem can be solved at a complexity that scales no worse than standard scalar kernel methods. 
Above, $\mathbf{c} \in \mathbb{R}^{Nd}$ is a concatenations of vectors $\mathbf{c}_{1},...,\mathbf{c}_{N}$ and $\mathbf{y}$ is a concatenation of the observations $\mathbf{y}_{1},...,\mathbf{y}_{N}$. 

{\bf Gradient Ascent Flow}: The RBO casts gradient field reconstruction problem as the above interpolation problem, where: $\mathcal{X} = \{\theta_{t}, \theta_{t}+\sigma \mathbf{g}_{1}^{t},...,\theta_{t}+\sigma \mathbf{g}_{k}^{t}\}$ and for $\mathbf{x}_{i} \in \mathcal{X}$ we have: $\mathbf{y}_{i} = \widehat{\nabla}_{\mathrm{RBO}}F(\mathbf{x}_{i})$.
After obtaining the solution $\widehat{\mathcal{F}}_{\mathrm{grad}} = \mathcal{F}_{*}$, the update of the current point $\theta_{t}$ is obtained via the standard gradient ascent flow in the neighborhood of $\theta_{t}$, which is the solution to the following differential equation:
\begin{equation}
d\theta / dt = \widehat{\mathcal{F}}_{\mathrm{grad}}(\theta),
\end{equation}
whose solution can be numerically obtained using such methods as Euler integration.
The comparison of the presented RBO algorithms with baseline ES is schematically presented in Fig. \ref{fig:rbo_scheme}.

\begin{comment}
In this section we want to give a short overview over the theory of matrix-valued kernels
and their application in interpolation. As matrix-valued kernels are an extension of the
well studied scalar-valued kernels, many of the following notions, properties and concepts
are again suitable extensions of their scalar-valued counterparts. For a more extensive
overview with regards to this topic and other approximation schemes involving matrix-valued kernels such as regression, we refer to literature, e.g. [1, 16, 21].
\end{comment}

\subsection{Time Complexity and Distributed Implementation} Our RBO implementation relies on distributed computations, where different workers evaluate $F$ in different subsets of perturbations. The 
time needed to construct all approximate gradients $\mathbf{y}_{i}$ in a given iteration of the algorithm is negligible compared to the time needed for querying $F$ (because calculations of $\mathbf{y}_{i}$ can be also easily parallelized). Computations of $\mathbf{c}$ from Equation \ref{inverse} can be efficiently conducted using separability and random feature maps\cite{rahimi2008random}. We also noted that in practice the gradient-flow extension of the RBO requires many fewer perturbations per iteration than baseline ES and one can use state-of-the-art compact neural networks from \cite{stockholm}, which encode RL policies with a few hundred parameters. This further reduces the cost of computing the gradient field.

\section{Provably Robust Gradient Recovery}
\label{sec:robust_convergence}

It turns out that the RBO with LP decoding ($p=1$, $\alpha=0$) is particularly resilient to noisy measurements. This is surprising at first glance, since we empirically tested (see: Section \ref{sec:exp}) that it is true even when a substantial number of measurements are very inaccurate and when the assumption that noise for each measurement is independent clearly does not hold (e.g. for noisy dynamics or when measurements are spread into simulator calls and real hardware experiments).

In this section we explain why it is the case. We leverage the results from a completely different field: adversarial attacks for database systems \cite{dwork2007price} and explain why RBO with LP decoding can create an accurate sparse approximation $\widehat{\mathcal{F}}_{\mathrm{grad}}$ to the true gradient field $\mathcal{F}_{\mathrm{grad}}$ with loglinear number of measurements per point even if up to $\rho^{*} = 0.2390318914495168...$ of all the measurements are arbitrarily corrupted. Interestingly, we do not require any assumptions regarding sparsity of the gradient vectors. 

We also present convergence results for RBO under certain regularity assumptions regarding functions $F$. These can be translated to 
the results on convergence to local maxima for less regular mappings $F$. All proofs as well as standard definitions of  $L$-Lipschitz, $\lambda$-smooth and $\mu$-strongly concave functions are given in the Appendix. %\textcolor{red}{Aldo: Give commented out here definitions of smoothness, Lipschitz and strong concativity in the Appendix}

\begin{definition}[coefficient $\rho^{*}$]
Let $X \sim \mathcal{N}(0,1)$ and denote: $Y = |X|$. Let $f$ be the $\mathrm{pdf}$ of $Y$ and $F$ be its $\mathrm{cdf}$ function. Define $g(x)=\int_{x}^{\infty} yf(y)dy$. Function $g$ is continuous and decreasing in the interval $[0, \infty]$ and furthermore $g(0) = \mathbb{E}[Y]$. Since $\lim_{x \rightarrow \infty} g(x) = 0$, there exists $x^{*}$ such that $g(x^{*}) = \frac{\mathbb{E}[Y]}{2}$. We define $\rho^{*}$ as:
\begin{equation}
\rho^{*} = 1 - F^{-1}(x*).    
\end{equation}
Its exact numerical value is 
$\rho^{*}= 0.2390318914495168...$.
% 0.239031891449516803895...
\end{definition}

\begin{comment}
\begin{definition}[$\lambda$-smoothness]
A differentiable concave function $F: \Theta \rightarrow \mathbb{R}$ is smooth with parameter $\lambda>0$ if for every pair of points $x,y \in \Theta$:
\begin{equation*}
    \| \nabla F(y) - \nabla F(x)\|_{2} \leq \lambda \| y-x \|_2
\end{equation*}
If $F$ is twice differentiable it is equivalent to $ -\lambda I \preceq \nabla^2 F(x) \preceq 0$ for all $x \in \Theta$.
\end{definition}

\begin{definition}[$L$-Lipschitz]
We say that $F:\Theta \rightarrow \mathbb{R}$ is \emph{Lipschitz} with parameter $L>0$ if for all $x,y \in \Theta$ it satisfies $|F(x) - F(y)| \leq L \| x - y \|_2$.
\end{definition}
\end{comment}

The following result shows the robustness of the RBO gradients under substantial noise:

\begin{lemma}\label{lemma::gradient_approximation}
There exist universal constants $C,c>0$ such that the following holds. Let $F:\Theta \rightarrow \mathbb{R}$ be a $\lambda-$smooth function.
Assume that at most the $\rho^{*}$-fraction of all the measurements are arbitrarily corrupted and the other ones have error at most $\epsilon$. If $\sigma_{t} = \sqrt{ \frac{\epsilon}{d \lambda}}$, $k \geq C d $ and RBO uses LP decoding, with probability $p=1-\exp\left(-cd\right)$ the following holds:
\begin{equation}
\|\nabla_{\mathrm{RBO}}F(\theta_{t}) - \nabla F(\theta)\|_{2} \leq 2C\sqrt{\epsilon d\lambda}.   
\end{equation}
\end{lemma}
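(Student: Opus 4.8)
The plan is to recognize the RBO/LP-decoding subproblem as a \emph{robust linear regression with a Gaussian design}, in the spirit of the error-correcting-code results of \cite{candes2005decoding,dwork2007price}, and to reduce the lemma to one geometric fact about Gaussian matrices. First I would linearize via $\lambda$-smoothness: Taylor's theorem gives, for each perturbation direction $\mathbf{g}_i$,
\[
F(\theta_t+\sigma_t\mathbf{g}_i)-F(\theta_t)=\sigma_t\langle\nabla F(\theta_t),\mathbf{g}_i\rangle+r_i,\qquad |r_i|\le\tfrac{\lambda}{2}\sigma_t^2\|\mathbf{g}_i\|_2^2 .
\]
A standard Gaussian norm tail bound together with a union bound over the $k=O(d)$ directions shows that with probability $1-\exp(-cd)$ we have $\|\mathbf{g}_i\|_2^2\le 2d$ for every $i$; on this event $|r_i|\le\lambda\sigma_t^2 d=\epsilon$ by the choice $\sigma_t=\sqrt{\epsilon/(d\lambda)}$. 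Folding in the at-most-$\epsilon$ measurement error on the uncorrupted rows and writing $\mathbf{Z}_t=\sigma_t\mathbf{G}$ with $\mathbf{G}\in\mathbb{R}^{k\times d}$ having i.i.d.\ $\mathcal{N}(0,1)$ rows, the target vector becomes $\mathbf{y}_t=\sigma_t\mathbf{G}\,\nabla F(\theta_t)+\mathbf{n}$, where $\mathbf{n}$ is arbitrary on a set $S$ of at most $\rho^*k$ corrupted coordinates and bounded by $2\epsilon$ in magnitude on $S^c$. Thus $\widehat{\nabla}_{\mathrm{RBO}}F(\theta_t)=\arg\min_{\mathbf{v}}\|\mathbf{y}_t-\sigma_t\mathbf{G}\mathbf{v}\|_1$ is exactly the LP decoder for this model (the $\tfrac{1}{2k}$ prefactor is an irrelevant positive scalar, and $\alpha=0$).

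Next I would run the usual LP-decoding optimality argument. Setting $h=\widehat{\nabla}_{\mathrm{RBO}}F(\theta_t)-\nabla F(\theta_t)$, comparing the $\ell_1$ objective at $\widehat{\nabla}_{\mathrm{RBO}}F(\theta_t)$ and at $\nabla F(\theta_t)$ and splitting coordinates across $S$ and $S^c$ with the triangle inequality yields the purely deterministic bound
\[
\sigma_t\Big(\|\mathbf{G}h\|_1-2\max_{|T|\le\rho^*k}\sum_{i\in T}|(\mathbf{G}h)_i|\Big)\;\le\;2\sum_{i\in S^c}|n_i|\;\le\;4k\epsilon .
\]
Hence the whole argument reduces to proving the matrix fact: with probability $1-\exp(-cd)$ over $\mathbf{G}$, $\|\mathbf{G}h\|_1-2\max_{|T|\le\rho^*k}\sum_{i\in T}|(\mathbf{G}h)_i|\ge\gamma\,k\,\|h\|_2$ for an absolute constant $\gamma>0$, uniformly over $h\in\mathbb{R}^d$. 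For a \emph{fixed} unit vector $h$ the entries $(\mathbf{G}h)_i$ are i.i.d.\ $\mathcal{N}(0,1)$, so $\tfrac1k\|\mathbf{G}h\|_1$ concentrates around $\mathbb{E}[Y]$ and $\tfrac1k\max_{|T|\le\rho k}\sum_{i\in T}|(\mathbf{G}h)_i|$ around the truncated moment $g(x_\rho)$, where $x_\rho$ is the $(1-\rho)$-quantile of $Y=|X|$; by the very definition of $\rho^*$ one has $g(x_{\rho^*})=\mathbb{E}[Y]/2$, so the population version of the left-hand side is $\mathbb{E}[Y]-2g(x_\rho)\ge0$, and strictly positive for every $\rho<\rho^*$ (degrading to $0$ as $\rho\uparrow\rho^*$, which is why $\rho^*$ is exactly the tolerable corruption rate). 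Promoting this from a fixed $h$ to all $h$ is done with an $\epsilon$-net of the unit sphere (size $e^{O(d)}$), a sub-exponential concentration bound at each net point, and the operator-norm estimate $\|\mathbf{G}\|_{\mathrm{op}}\lesssim\sqrt{k}$ to absorb the discretization error; this is precisely where the hypothesis $k\ge Cd$ is needed, so that $e^{O(d)}e^{-c'k}=e^{-cd}$.

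Combining the two displays gives $\gamma\,k\,\|h\|_2\le 4k\epsilon/\sigma_t$, i.e.\ $\|h\|_2\le(4/\gamma)\,\epsilon/\sigma_t=(4/\gamma)\sqrt{\epsilon d\lambda}$, which is the claimed estimate with $C=2/\gamma$ (the harmless loss in the ``$2\epsilon$ vs.\ $\epsilon$'' step being absorbed into $C$). The main obstacle is the uniform lower bound of the previous paragraph: one must both (i) identify the sharp breakdown fraction as exactly $\rho^*$ — i.e.\ match the worst-case adversarial ``mass grab'' $\max_{|T|\le\rho k}\sum_{i\in T}|(\mathbf{G}h)_i|$ to the truncated first moment $g(\cdot)$ of a half-normal, which is where the function $g$ and the threshold in the Definition of $\rho^*$ come from — and (ii) make the concentration uniform over the sphere with a failure probability small enough to survive the $e^{O(d)}$ union bound, which is what forces the linear-in-$d$ sample size. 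Everything else (the Taylor estimate, the Gaussian norm bound, and the deterministic optimality inequality) is routine, and I would cite \cite{candes2005decoding,dwork2007price} for the LP-decoding machinery, deferring the detailed net argument and the remaining convergence corollaries to the Appendix.
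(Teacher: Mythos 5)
Your proposal matches the paper's proof: both linearize $F$ via the $\lambda$-smoothness Taylor bound, balance the Taylor error $\sigma_t d\lambda$ against the measurement error $\epsilon/\sigma_t$ by choosing $\sigma_t=\sqrt{\epsilon/(d\lambda)}$, and then invoke the LP-decoding robustness guarantee for Gaussian designs (Theorem 1 of \cite{dwork2007price}). The only difference is one of detail: the paper cites that theorem as a black box, whereas you additionally unpack its internals (the $\ell_1$-optimality inequality, the uniform lower bound whose breakdown point is exactly the $\rho^*$ of the paper's Definition, and the net argument forcing $k\geq Cd$), which is consistent with and strictly more explicit than the paper's argument.
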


\begin{comment}
The above uses the result from database community regarding breaking privacy of database systems with dot-product queries and under adversarial noise \cite{dwork2007price}. The role of the database vector is played by the gradient of the RL blackbox function and noise coming from the measurements corresponds to the adversarial noise added to exact answers of dot-product queries in order to protect the database from being compromised.
\end{comment}

We are ready to state our main theoretical result.

\begin{theorem}\label{theorem::convergence_robust}
Consider a blackbox function $F:\Theta \rightarrow \mathbb{R}$. Assume that $F$ is concave, Lipschitz with parameter $L$ and smooth with smoothness parameter $\lambda$. Assume furthermore that domain $\Theta\subset \mathbb{R}^d$ is convex and has $l_2$ diameter $\mathcal{B} < \infty$. 
Consider $\mathrm{Algorithm}$ \ref{Alg:tr-robust_gradient_recovery}
with $p=1, \alpha=0, \tau = 0, \sigma_t \leq \frac{L}{d\lambda \sqrt{t+1}}, \eta_t = \frac{\mathcal{B}}{L\sqrt{t+1}}$ and the noisy setting in which at each step a fraction of at most $\rho^{*}$ of all measurements $F(\theta_{t}+\sigma_{t}\mathbf{g}_{j}^{t})$ are arbitrarily corrupted for $j=1,2,...,k$. 
There exists a universal constant $c_{1}>0$ such that for any $\gamma \in (0,1)$ and $T \leq \gamma \exp(c_1 d )$, the following holds
with probability at least $1-\gamma$:
\begin{equation*}
    F(\theta^*) -  \left[ \frac{1}{T} \sum_{t=0}^{T-1} F(\theta_t) \right]     \leq\frac{13}{2}\mathcal{B} L \frac{1}{\sqrt{T}},
\end{equation*}

%\note{I think we can get rid of the expectation here.}
\end{theorem}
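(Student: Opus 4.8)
\textbf{Proof strategy for Theorem~\ref{theorem::convergence_robust}.}
The plan is to read Algorithm~\ref{Alg:tr-robust_gradient_recovery} (run with $p=1$, $\alpha=0$, $\tau=0$) as \emph{inexact projected gradient ascent} on the concave function $F$, and to carry out the classical averaging analysis for such schemes while feeding in the per-step gradient error supplied by Lemma~\ref{lemma::gradient_approximation}. First I would record two elementary facts: step~9, $\theta_{t+1}=\arg\max_{\theta\in\Theta}\langle\theta,\mathbf{u}_{t+1}\rangle-\tfrac12\|\theta\|_2^2$, is exactly the Euclidean projection $\Pi_\Theta(\mathbf{u}_{t+1})$, hence $1$-Lipschitz; and with $\tau=0$ the update reads $\mathbf{u}_{t+1}=\theta_t+\eta_t g_t$ with $g_t:=\widehat{\nabla}_{\mathrm{RBO}}F(\theta_t)$. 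Since $F$ is $L$-Lipschitz, $\|\nabla F(\theta)\|_2\le L$ on $\Theta$, and since $\Theta$ is convex of diameter $\mathcal{B}$, a maximizer $\theta^\ast\in\Theta$ exists.

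\textbf{Controlling the gradient error.} The schedule $\sigma_t\le \frac{L}{d\lambda\sqrt{t+1}}$ is chosen precisely so that Lemma~\ref{lemma::gradient_approximation} applies at every iterate $\theta_t$: the $\rho^\ast$-fraction corruption hypothesis is inherited from the theorem, the uncorrupted measurements carry only the $\lambda$-smoothness (Taylor) error, which at scale $\sigma_t$ is of order $\epsilon_t:=\sigma_t^2 d\lambda\le \frac{L^2}{d\lambda(t+1)}$, and $k\ge Cd$. Lemma~\ref{lemma::gradient_approximation} then gives that the event
\[
\mathcal{E}_t:=\Bigl\{\,\|g_t-\nabla F(\theta_t)\|_2\le \delta_t\,\Bigr\},\qquad \delta_t:=2C\sqrt{\epsilon_t d\lambda}\le \tfrac{2CL}{\sqrt{t+1}},
\]
holds with probability at least $1-e^{-cd}$. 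A union bound over $t=0,\dots,T-1$ yields $\Pr\bigl[\bigcap_{t<T}\mathcal{E}_t\bigr]\ge 1-Te^{-cd}\ge 1-\gamma$ exactly when $T\le\gamma e^{cd}$, which is where the restriction $T\le\gamma\exp(c_1 d)$ is consumed (take $c_1=c$). From here on I condition on $\bigcap_{t<T}\mathcal{E}_t$, write $e_t:=g_t-\nabla F(\theta_t)$ with $\|e_t\|_2\le\delta_t$, so that $\|g_t\|_2\le L+\delta_t$.

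\textbf{Per-step inequality and summation.} On this event, nonexpansiveness of $\Pi_\Theta$ and $\theta^\ast\in\Theta$ give
\[
\|\theta_{t+1}-\theta^\ast\|_2^2\le \|\theta_t+\eta_t g_t-\theta^\ast\|_2^2=\|\theta_t-\theta^\ast\|_2^2-2\eta_t\langle g_t,\theta^\ast-\theta_t\rangle+\eta_t^2\|g_t\|_2^2,
\]
hence $\langle g_t,\theta^\ast-\theta_t\rangle\le \frac{\|\theta_t-\theta^\ast\|_2^2-\|\theta_{t+1}-\theta^\ast\|_2^2}{2\eta_t}+\frac{\eta_t}{2}\|g_t\|_2^2$. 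Concavity of $F$ together with Cauchy--Schwarz give $F(\theta^\ast)-F(\theta_t)\le\langle\nabla F(\theta_t),\theta^\ast-\theta_t\rangle=\langle g_t,\theta^\ast-\theta_t\rangle+\langle -e_t,\theta^\ast-\theta_t\rangle\le\langle g_t,\theta^\ast-\theta_t\rangle+\delta_t\mathcal{B}$. Combining these, summing over $t=0,\dots,T-1$, using that $1/\eta_t$ is nondecreasing so the first term telescopes into $\le \frac{\mathcal{B}^2}{2\eta_{T-1}}=\frac{\mathcal{B}L\sqrt{T}}{2}$, and bounding $\sum_{t<T}\eta_t(L+\delta_t)^2$ and $\sum_{t<T}\delta_t\mathcal{B}$ using $\eta_t=\frac{\mathcal{B}}{L\sqrt{t+1}}$, $\delta_t=O(L/\sqrt{t+1})$ and $\sum_{t<T}\tfrac{1}{\sqrt{t+1}}\le 2\sqrt{T}$, one finds that every contribution is of the form $(\text{const})\cdot\mathcal{B}L\sqrt{T}$; dividing by $T$ and collecting the constants yields the claimed bound $\frac{13}{2}\mathcal{B}L/\sqrt{T}$ on $F(\theta^\ast)-\frac1T\sum_{t<T}F(\theta_t)$.

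\textbf{Main obstacle.} The genuinely hard content is already packaged inside Lemma~\ref{lemma::gradient_approximation} (the LP-decoding robustness); the residual difficulties are (i) verifying that the lemma is applicable at \emph{every} iterate with the stated $\sigma_t$ --- i.e., that the clean-measurement error really is $O(\sigma_t^2 d\lambda)$ under $\lambda$-smoothness and that the $\rho^\ast$ corruption budget survives the re-scaling of the regression system; (ii) the union bound, which is exactly what forces the horizon $T$ to be at most exponential in $d$ and the reason the guarantee is only ``with probability $1-\gamma$''; and (iii) the constant bookkeeping needed to land precisely on $\tfrac{13}{2}$ rather than a generic $O(\mathcal{B}L/\sqrt{T})$, which is routine but is where the particular coefficients of the step-size and smoothing schedules are used up.
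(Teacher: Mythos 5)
Your proposal follows essentially the same route as the paper's proof: read the update as inexact projected gradient ascent, derive a per-step inequality from nonexpansiveness of the projection, Cauchy--Schwarz over the diameter $\mathcal{B}$, and concavity, then telescope with the stated $\sigma_t,\eta_t$ schedules and take a union bound over the $T$ per-iteration LP-decoding events (which is exactly where the restriction $T\le\gamma\exp(c_1 d)$ is consumed). The one discrepancy is your choice to feed in Lemma~\ref{lemma::gradient_approximation}, whose bound $2C\sqrt{\epsilon_t d\lambda}=2C\sigma_t d\lambda$ carries the unknown universal constant $C$: the paper instead proves a separate supporting lemma for the exact-clean-measurement setting of this theorem asserting $\|\widehat{\nabla}_{\mathrm{RBO}}F(\theta_t)-\nabla F(\theta_t)\|_2\le\sigma_t d\lambda$ with no such constant, and it is only with that cleaner bound that the bookkeeping ($1$ from the telescoped term, $8$ from $\sum_t\eta_t(L+\sigma_t d\lambda)^2$, $4$ from $\sum_t 2\mathcal{B}\sigma_t d\lambda$) lands on the stated $\tfrac{13}{2}\mathcal{B}L/\sqrt{T}$ rather than a $C$-dependent multiple of $\mathcal{B}L/\sqrt{T}$.
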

where $\theta^{*} = \arg \max_{\theta \in \Theta} F(\theta)$. 
If $F$ presents extra curvature properties such as being strongly concave, we can get a linear convergence rate. The following theorem holds:

\begin{comment}
\begin{definition}[Strong concavity]
A function $F:\Theta \rightarrow \mathbb{R}$ is strongly concave with parameter $\mu$ if:
\begin{equation*}
    F(y) \leq F(x) + \langle \nabla F(x), y-x \rangle - \frac{\mu}{2} \| y - x\|^2_2
\end{equation*}
\end{definition}
\end{comment}

\begin{theorem}\label{theorem::convergence_robust_strongconvex}
Assume conditions from Theorem \ref{theorem::convergence_robust} and furthermore that $F$ is strongly concave with parameter $\mu$. 
Take $\mathrm{Algorithm}$ \ref{Alg:tr-robust_gradient_recovery}
with $p=1, \alpha = 0, \tau = 0,  \sigma_t \leq \frac{L^2}{d \mathcal{B} \mu \lambda (t+1)}, \eta_t = \frac{1}{\mu(t+1)}$ acting 
in the noisy environment in which at each step a fraction of at most $\rho^{*}$ of all measurements $F(\theta_{t}+\sigma_{t}\mathbf{g}_{j}^{t})$ 
are arbitrarily corrupted for $j=1,2,...,k$. There exists a universal constant $c_{1}>0$ such that for any $\gamma \in (0,1)$ and $T \leq \gamma \exp(c_1 d )$, 
with probability at least $1-\gamma$:
\begin{equation*}
F(\theta^*) - \left[ \frac{1}{T}\sum_{t=0}^{T-1}  F(\theta_t) \right] \leq \frac{6 L^2}{\mu}\frac{(1+\log(T))}{T}.
\end{equation*}

%\note{I think we can get rid of the expectation here.}
\end{theorem}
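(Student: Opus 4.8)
The plan is to run the textbook projected-gradient-ascent-with-inexact-gradient analysis, specialized to the strongly concave case, with the per-step gradient error controlled by Lemma~\ref{lemma::gradient_approximation}; structurally this mirrors the proof of Theorem~\ref{theorem::convergence_robust} but replaces the $O(1/\sqrt T)$ telescoping by the strongly concave $1/(t+1)$ telescoping. First I would observe that lines~8--9 of Algorithm~\ref{Alg:tr-robust_gradient_recovery} are exactly one step of Euclidean projected gradient ascent: since $\arg\max_{\theta\in\Theta}\langle\theta,\mathbf{u}_{t+1}\rangle-\tfrac12\|\theta\|_2^2=\arg\min_{\theta\in\Theta}\tfrac12\|\theta-\mathbf{u}_{t+1}\|_2^2=\Pi_\Theta(\mathbf{u}_{t+1})$, we have $\theta_{t+1}=\Pi_\Theta(\theta_t+\eta_t g_t)$ with $g_t:=\widehat{\nabla}_{\mathrm{RBO}}F(\theta_t)$. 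Because $\tau=0$ no stale samples are reused, so $g_t$ is precisely the LP-decoding estimator of Lemma~\ref{lemma::gradient_approximation} built from $k\geq Cd$ fresh perturbations at $\theta_t$.

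Next I would invoke Lemma~\ref{lemma::gradient_approximation} at each iterate. For a $\lambda$-smooth $F$ the Taylor-truncation error of a clean measurement $F(\theta_t+\sigma_t\mathbf g)$ is of order $d\lambda\sigma_t^2=:\epsilon_t$, so $\sigma_t=\sqrt{\epsilon_t/(d\lambda)}$ is consistent and the lemma gives $\delta_t:=\|g_t-\nabla F(\theta_t)\|_2\leq 2C\sqrt{\epsilon_t d\lambda}=O(d\lambda\sigma_t)$; with the theorem's choice $\sigma_t\leq\frac{L^2}{d\mathcal B\mu\lambda(t+1)}$ (the inequality leaving room to absorb the universal constant of the lemma) this is $\delta_t\leq\frac{L^2}{\mathcal B\mu(t+1)}$, each holding with probability $1-\exp(-cd)$. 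A union bound over $t=0,\dots,T-1$ puts us on an event $\mathcal E$ of probability at least $1-T\exp(-cd)\geq 1-\gamma$, where the last inequality forces exactly the stated horizon $T\leq\gamma\exp(c_1 d)$ with $c_1=c$. Note that from here the argument is deterministic on $\mathcal E$: the adversarial corruption has already been handled inside the LP-decoding guarantee, so no concentration over the trajectory is required.

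On $\mathcal E$ I would then run the standard one-step inequality. Nonexpansiveness of $\Pi_\Theta$ and $\theta^*\in\Theta$ give $\|\theta_{t+1}-\theta^*\|_2^2\leq\|\theta_t-\theta^*\|_2^2-2\eta_t\langle g_t,\theta^*-\theta_t\rangle+\eta_t^2\|g_t\|_2^2$; writing $\langle g_t,\theta^*-\theta_t\rangle=\langle\nabla F(\theta_t),\theta^*-\theta_t\rangle+\langle g_t-\nabla F(\theta_t),\theta^*-\theta_t\rangle$, lower-bounding the second term by $-\delta_t\mathcal B$ via the diameter bound, and using $\mu$-strong concavity $\langle\nabla F(\theta_t),\theta^*-\theta_t\rangle\geq F(\theta^*)-F(\theta_t)+\tfrac\mu2\|\theta^*-\theta_t\|_2^2$, one gets
\[
F(\theta^*)-F(\theta_t)\leq\Big(\tfrac{1}{2\eta_t}-\tfrac\mu2\Big)\|\theta_t-\theta^*\|_2^2-\tfrac{1}{2\eta_t}\|\theta_{t+1}-\theta^*\|_2^2+\tfrac{\eta_t}{2}\|g_t\|_2^2+\delta_t\mathcal B .
\]
Summing over $t=0,\dots,T-1$ with $\eta_t=\frac{1}{\mu(t+1)}$, so that $\frac{1}{2\eta_t}-\frac\mu2=\frac{\mu t}{2}$ and $\frac{1}{2\eta_t}=\frac{\mu(t+1)}{2}$, the first two groups telescope to $b_0-b_T$ with $b_t=\frac{\mu t}{2}\|\theta_t-\theta^*\|_2^2$, which is $\leq 0$ (and $b_0=0$ kills any initial-distance term). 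For the residuals, $\|g_t\|_2\leq\|\nabla F(\theta_t)\|_2+\delta_t\leq L+\delta_t$ with $\delta_t$ of lower order, so $\sum_{t}\frac{\eta_t}{2}\|g_t\|_2^2\leq\frac{c'L^2}{\mu}\sum_{t}\frac1{t+1}\leq\frac{c'L^2(1+\log T)}{\mu}$ using $\sum_{t=0}^{T-1}\frac1{t+1}\leq 1+\log T$, while $\mathcal B\sum_t\delta_t\leq\frac{L^2}{\mu}\sum_t\frac1{t+1}\leq\frac{L^2(1+\log T)}{\mu}$. Dividing by $T$ and collecting the absolute constants yields $F(\theta^*)-\frac1T\sum_{t=0}^{T-1}F(\theta_t)\leq\frac{6L^2}{\mu}\frac{1+\log T}{T}$.

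The only genuinely nonroutine point is the bookkeeping that links the step-specific smoothing $\sigma_t$, the Taylor error $\epsilon_t\sim d\lambda\sigma_t^2$, and the resulting gradient error $\delta_t$ from Lemma~\ref{lemma::gradient_approximation}, so that the stated schedule for $\sigma_t$ makes $\delta_t$ decay like $1/(t+1)$ and hence sum to $O(\log T)$; everything else is the standard strongly convex projected-gradient telescoping plus the union bound that produces the $T\leq\gamma\exp(c_1 d)$ restriction.
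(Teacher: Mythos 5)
Your proposal is correct and follows essentially the same route as the paper: the LP-decoding guarantee gives a per-step gradient error of order $d\lambda\sigma_t$ (the paper packages this as a separate appendix lemma for the exact-measurement case rather than re-invoking Lemma~\ref{lemma::gradient_approximation} with $\epsilon=d\lambda\sigma_t^2$, but the bound is the same), and the rest is the standard strongly concave projected-gradient telescoping with $\eta_t=\frac{1}{\mu(t+1)}$ killing the distance terms and the $\sigma_t\propto 1/(t+1)$ schedule making the bias terms sum to $O(\log T)$. Your explicit union bound yielding the $T\leq\gamma\exp(c_1d)$ restriction is actually spelled out more carefully than in the paper.
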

To summarize, even if up to $23\%$ of all the measurements are arbitrarily corrupted, RBO can provably recover gradients to high accuracy! We provide empirical evidence of this result next.
\section{Empirical Analysis}
\label{sec:exp}

\vspace{-3mm}
\begin{figure}[H]
\begin{minipage}{1.0\textwidth}
	\subfigure{\includegraphics[keepaspectratio, width=0.255\textwidth]{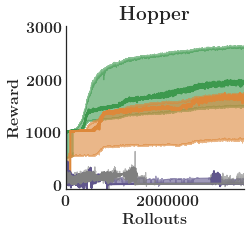}}
	\subfigure{\includegraphics[keepaspectratio, width=0.235\textwidth]{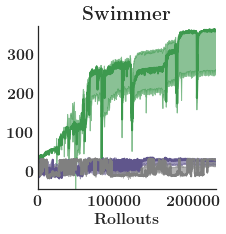}}
	\subfigure{\includegraphics[keepaspectratio, width=0.235\textwidth]{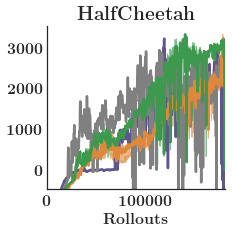}}
	\subfigure{\includegraphics[keepaspectratio, width=0.24\textwidth]{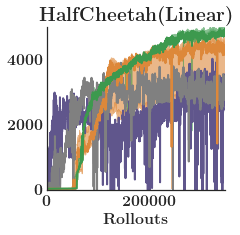}}
\end{minipage}
\begin{minipage}{1.0\textwidth}
	\subfigure{\includegraphics[keepaspectratio, width=0.26\textwidth]{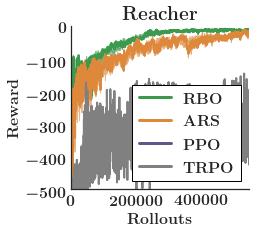}}
	\subfigure{\includegraphics[keepaspectratio, width=0.24\textwidth]{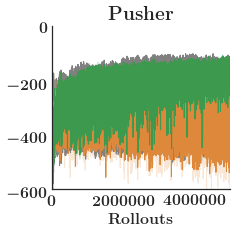}}
	\subfigure{\includegraphics[keepaspectratio, width=0.235\textwidth]{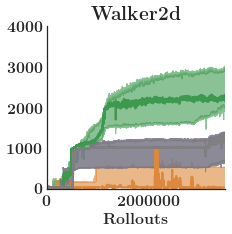}}
	\subfigure{\includegraphics[keepaspectratio, width=0.24\textwidth]{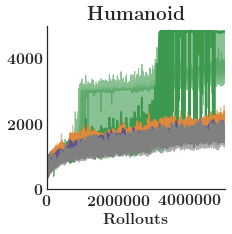}}
\end{minipage}
    \caption{\small{Performance of RBO, $\mathrm{ARS}$, PPO and TRPO on seven benchmark RL environments. Results presented are the median from $3$ seeds, with the $\mathrm{min}$ and $\mathrm{max}$ shaded. In each case, $20\%$ of the rewards are significantly corrupted. In some cases, this noise led to drastically worse performance for non-RBO methods, which we omit from the plots, but include in the tabular data in Table 1.}}
	\label{fig:mujoco} 
\vspace{-5mm}	
\end{figure}

The real world is often far noisier than environments typically used for benchmarking RL algorithms. With this in mind, the primary goal of our experiments is to demonstrate that RBO is able to efficiently learn good policies in the presence of noise, where other approaches fail. To investigate this, we consider two settings: 
\begin{enumerate}
    \item $\mathrm{OpenAI}$ $\mathrm{Gym}$ \cite{Gym} 
MuJoCo environments, where $20\%$ of the measurements are significantly corrupted (we show that adding noise presents a challenge to baseline algorithms). 
    \item Real-world quadruped locomotion tasks, where sim-to-real transfer is non-trivial.
\end{enumerate}

We run RBO with LP-decoding to obtain provably noise-robust reconstruction of ES gradients.

\vspace{-2mm}
\paragraph{OpenAI Gym:}
We conducted an exhaustive analysis of the proposed class of RBO algorithms on the following $\mathrm{OpenAI}$ $\mathrm{Gym}$ \cite{Gym} 
benchmark RL tasks: $\mathrm{Swimmer}$, $\mathrm{HalfCheetah}$, $\mathrm{Hopper}$, $\mathrm{Walker2d}$, $\mathrm{Humanoid}$,  $\mathrm{Pusher}$ and $\mathrm{Reacher}$. 
All but $\mathrm{HalfCheetah}$ $\mathrm{Linear}$ experiments are for a policy encoded by feedforward neural networks with two hidden layers of size $h=41$ each and $\mathrm{tanh}$ nonlinearities. $\mathrm{HalfCheetah}$ $\mathrm{Linear}$ is for the linear architecture.
We compare RBO to state-of-the-art ES algorithm $\mathrm{ARS}$ \cite{horia}, as well as two state-of-the art policy gradient algorithms: TRPO \cite{schulman2015trust} and PPO \cite{schulman2017proximal}. In all cases, we corrupt $20\%$ of the measurements. 
As we show in Fig. \ref{fig:mujoco}, the noise often renders the other algorithms unable to learn optimal policies, yet RBO remains unscathed and consistently learns good policies for all tasks. Under the presence of substantial noise the other methods often drastically underperform RBO, as we show in Table 1.

\vspace{-3.0mm} 
\begin{table}[H]
\small
  \label{rl_results}
  \centering
   \begin{tabular}{l*6{c}}
        \toprule
        \multicolumn{6}{c}{\textbf{Median reward after \# rollouts}}                   \\
        \cmidrule(r){3-6}
        \textbf{Environment}     & \textbf{Rollouts}     & RBO &  $\mathrm{ARS}$ & TRPO & PPO \\
        \midrule
        $\mathrm{HalfCheetah}$ $\mathrm{(Linear)}$ & $2.10^5$  & \textbf{4220} & 4205 & \textcolor{red}{\textbf{\textbf{-Inf}}} & \textcolor{red}{\textbf{-Inf}}   \\
        $\mathrm{HalfCheetah}$ $\mathrm{(Toeplitz)}$ & $2.10^5$  & \textbf{3299} & 3163 & \textcolor{red}{\textbf{-Inf}} & \textcolor{red}{\textbf{-Inf}}   \\        
        $\mathrm{Swimmer}$ & $2.10^5$  & \textbf{360} & \textcolor{red}{\textbf{-Inf}} & 32 & 30   \\
        $\mathrm{Walker2d}$ & $2.10^6$  & \textbf{2230} & 172 & 996 & 312   \\
        $\mathrm{Hopper}$ & $1.10^6$  & \textbf{1503} & 1408 & 427 & 256   \\
        $\mathrm{Humanoid}$ & $5.10^6$  & \textbf{4865} & 2355 & 2028 & 2129   \\
        $\mathrm{Pusher}$ & $1.10^6$  & \textbf{-155} & -199 & \textcolor{red}{\textbf{-Inf}} & \textcolor{red}{\textbf{-Inf}}   \\
        $\mathrm{Reacher}$ & $5.10^5$  & \textbf{-7} & -19 & \textcolor{red}{\textbf{-Inf}} & \textcolor{red}{\textbf{-Inf}}  \\
        \bottomrule
      \end{tabular}
  \caption{\small{Median rewards obtained across $k=5$ seeds for seven RL environments. Bold represents the best performing algorithm in each environment, red indicates failure to learn.}}     
\vspace{-4mm}  
\end{table}
%\vspace{-4mm}

\vspace{-4mm}
\paragraph{Quadruped Locomotion:} We tested RBO on quadruped locomotion tasks for a Minitaur robot~\cite{kenneally2016design} (see: Fig. \ref{fig:minitaur}),
with different reward functions for different locomotion tasks.
Minitaur has $4$ legs and $8$ degrees of freedom, where each leg has the ability to swing and extend to a certain degree using the PD controller 
provided with the robot. We train our policies in simulation using the pybullet environment modeled after the robot \cite{coumans}.
To learn walking for quadrupeds, we use architectures called \textit{Policies Modulating Trajectory Generators} (PMTGs) that have been recently proposed 
in \cite{atil}. They incorporate basic cyclic characteristics of the locomotion and leg movement primitives by using trajectory 
generators, a parameterized function that provides cyclic leg positions. The policy is responsible for modulating and adjusting leg trajectories. The results fully support our previous findings.
While without noise RBO and ARS (used as state-of-the-art method for these tasks \cite{atil}) perform similarly, in the presence of noise RBO is superior to ARS.

\vspace{-5mm}
\begin{figure}[H]
\centering
\begin{minipage}{0.99\textwidth}
	\raisebox{0.22\height}{\subfigure{\includegraphics[keepaspectratio, width=0.225\textwidth]{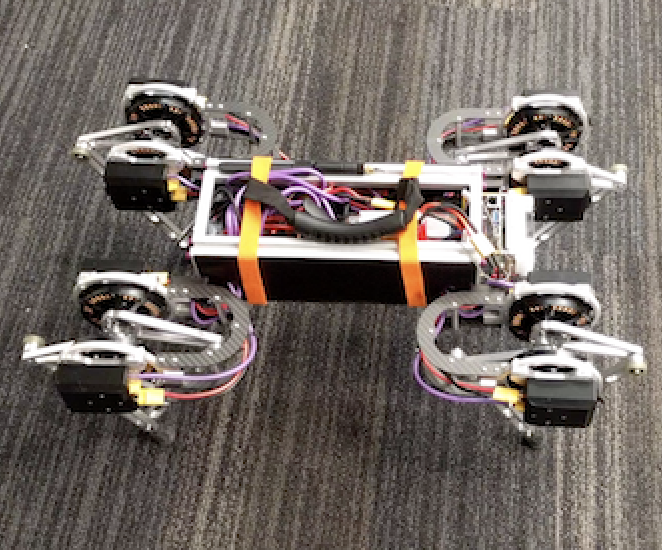}}}
	\subfigure{\includegraphics[keepaspectratio, width=0.26\textwidth]{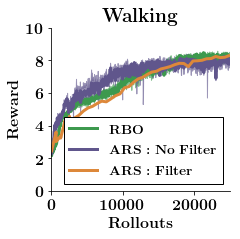}}	\subfigure{\includegraphics[keepaspectratio, width=0.23\textwidth]{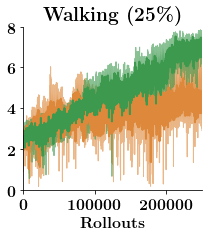}}
	\subfigure{\includegraphics[keepaspectratio, width=0.245\textwidth]{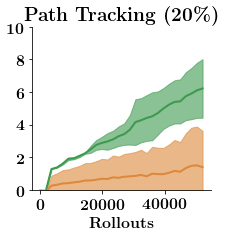}}
\end{minipage}
\vspace{-5mm}	
    \caption{\small{Left: The Minitaur robot. Right: Performance of RBO and $\mathrm{ARS}$ for two Minitaur simulated environments: forward walking (without noise and with $25\%$ noisy measurements) and path tracking with $20\%$ noisy measurements.}}
	\label{fig:minitaur} 
\vspace{-5mm}	
\end{figure}

\section{Conclusion}
\vspace{-3mm}
We proposed a new class of ES algorithms, called RBO, for optimizing RL policies that utilize gradient flows induced by vector field interpolation via matrix-valued kernels. The interpolators rely on general regularized regression methods that provide sample complexity reduction through sample reuse. We show empirically and theoretically that RBO is much less sensitive to noisy measurements, which are notoriously ubiquitous in robotics applications, than state-of-the-art baseline algorithms.

% no \bibliographystyle is required, since the corl style is automatically used.
% \bibliography{example}  % .bib
\bibliographystyle{abbrv}
\bibliography{rbo}

\newpage
 \section{Appendix}

\subsection{Definitions}
Here we introduce the definitions of $\lambda$-smoothenss and $L$-lipshitz. These are standard definitions in the optimization literature which we reproduce here for clarity (see for example \cite{hazan2016introduction}). 

\begin{definition}($\lambda$-smoothness):  A differentiable concave function $F:\Theta \rightarrow \mathbb{R}$ is smooth with parameter $\lambda > 0$ if for every pair of points $x,y \in \Theta$:
\begin{equation*}
    \|  \nabla F(y) - \nabla F(x) \|_2 \leq \lambda \| y-x \|_2
\end{equation*}
If $F$ is twice differentiable it is equivalent to $-\lambda I \preceq \nabla^2 F(x) \preceq 0$ for all $x \in \Theta$.
\end{definition}

\begin{definition}($L$-Lipschitz): We say that $F:\theta \rightarrow \mathbb{R}$ is Lipschitz with parameter $L > 0$ if for all $x,y \in \Theta$ it satisfies $| F(x) - F(y) | \leq L \| x-y \|_2$.

\end{definition}

\begin{definition}($\mu$-Strong Concavity): A function $F: \Theta \rightarrow \mathbb{R}$ is strongly concave with parameter $\mu$ if:
\begin{equation*}
    F(y) \leq F(x) + \langle \nabla F(x), y-x \rangle - \frac{\mu}{2} \| y-x \|^2_2
\end{equation*}

\end{definition}

\subsection{Proof of Lemma \ref{lemma::gradient_approximation}}

In this section we prove Lemma \ref{lemma::gradient_approximation} which we reproduce below for readability. This result is concerned with the case when a constant proportion of the measurements are arbitrarily corrupted, while the remaining ones are corrupted by a small amount $\epsilon$. We quantify the degree of corruption experienced by our gradient estimator. Let $\nabla F(\theta_t)$ be the real gradient of $F$ at $\theta_t$. Before proving the main Lemma of this section, let's show that whenever $\sigma_t$ is very small, and the evaluation of $F$ is noiseless, a difference of function evaluations is close to a dot product between the gradient of $F$ and the displacement vector. 
 
\begin{lemma}
    $F(\theta_t + \sigma_t v^{(t)}_j) - F(\theta_t) =  \langle \nabla F(\theta_t), \sigma_t \mathbf{g}_j^{(t)} \rangle + \xi_t$ with $|\xi_t| \leq \sigma_t^2 d \lambda$. 
\end{lemma}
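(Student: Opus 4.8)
The plan is to obtain the estimate from a second-order Taylor expansion of $F$ around $\theta_t$ along the direction $\sigma_t \mathbf{g}_j^{(t)}$, using the $\lambda$-smoothness hypothesis to control the quadratic remainder. First I would write, by the fundamental theorem of calculus applied to $s \mapsto F(\theta_t + s\,\sigma_t \mathbf{g}_j^{(t)})$ on $[0,1]$,
\begin{equation*}
F(\theta_t + \sigma_t \mathbf{g}_j^{(t)}) - F(\theta_t) = \langle \nabla F(\theta_t), \sigma_t \mathbf{g}_j^{(t)} \rangle + \int_0^1 \big\langle \nabla F(\theta_t + s\,\sigma_t \mathbf{g}_j^{(t)}) - \nabla F(\theta_t),\; \sigma_t \mathbf{g}_j^{(t)} \big\rangle\, ds,
\end{equation*}
so that $\xi_t$ is exactly the integral term. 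Then I would bound it by Cauchy--Schwarz followed by the $\lambda$-smoothness bound $\|\nabla F(\theta_t + s\,\sigma_t \mathbf{g}_j^{(t)}) - \nabla F(\theta_t)\|_2 \le \lambda \, s\, \sigma_t \|\mathbf{g}_j^{(t)}\|_2$, giving
\begin{equation*}
|\xi_t| \le \int_0^1 \lambda\, s\, \sigma_t^2 \|\mathbf{g}_j^{(t)}\|_2^2 \, ds = \tfrac{1}{2}\lambda\, \sigma_t^2 \|\mathbf{g}_j^{(t)}\|_2^2.
\end{equation*}

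The remaining point is to convert $\|\mathbf{g}_j^{(t)}\|_2^2$ into the factor $d$ claimed in the statement. Here I would invoke the sampling convention used throughout the paper for the LP-decoding variant: the perturbation directions are the renormalized (orthogonal) Gaussian vectors of length $\sqrt{d}$ described just before Lemma~1 and in Algorithm~\ref{Alg:tr-robust_gradient_recovery}, so $\|\mathbf{g}_j^{(t)}\|_2^2 = d$ deterministically; substituting this yields $|\xi_t| \le \tfrac{1}{2}\sigma_t^2 d\lambda \le \sigma_t^2 d \lambda$, which is the asserted inequality (with room to spare). If instead one wishes to work with raw i.i.d.\ Gaussian perturbations, the same conclusion holds with probability $1 - \exp(-\Omega(d))$ by a standard $\chi^2$ concentration bound $\|\mathbf{g}_j^{(t)}\|_2^2 \le 2d$, which is more than compatible with the high-probability statement of Lemma~\ref{lemma::gradient_approximation} that this sublemma feeds into; I would mention this only briefly.

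The main (minor) obstacle is simply being careful about the normalization convention for $\mathbf{g}_j^{(t)}$, since the bound's constant depends on whether $\|\mathbf{g}_j^{(t)}\|_2^2$ equals $d$, concentrates around $d$, or is merely $O(d)$ — but in every case the stated inequality $|\xi_t| \le \sigma_t^2 d\lambda$ follows with the factor $\tfrac12$ to absorb any slack. No further difficulty arises: the argument is a one-line Taylor estimate plus a norm bound.
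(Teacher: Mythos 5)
Your argument is correct and is exactly the route the paper takes: its entire proof is the single sentence ``this follows immediately from a Taylor expansion and the smoothness assumption on $F$,'' and your integral-remainder computation with the $\lambda$-smoothness bound is the honest version of that one-liner. Your additional care about the normalization of $\mathbf{g}_j^{(t)}$ (deterministic length $\sqrt{d}$ versus Gaussian concentration) addresses a point the paper silently glosses over, and in either convention your constant $\tfrac{1}{2}$ leaves the claimed bound $|\xi_t|\leq \sigma_t^2 d\lambda$ intact.
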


This follows immediately from a Taylor expansion and the smoothness assumption on $F$. %We we state and prove a more general version of Lemma \ref{lemma::gradient_approximation}.

\begin{lemma}\label{lemma::closeness_gradient_with_errors}
There exist universal constants $C,c>0$ such that the following holds. Let $F:\Theta \rightarrow \mathbb{R}$ be a $\lambda-$smooth function.
Assume that at most the $\rho^{*}$-fraction of all the measurements are arbitrarily corrupted and the other ones have error at most $\epsilon$. If $\sigma_t = \sqrt{ \frac{\epsilon}{d \lambda}}$, $k \geq C d $ and RBO uses LP decoding, the following holds with probability $p=1-\exp\left(-cd\right)$:
\begin{equation}\label{equation::bias_gradient_approximation}
\|\nabla_{\mathrm{RBO}}F(\theta_{t}) - \nabla F(\theta)\|_{2} \leq 2C\sqrt{\epsilon d\lambda}.   
\end{equation}
\end{lemma}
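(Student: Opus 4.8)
\textbf{Proof proposal for Lemma~\ref{lemma::closeness_gradient_with_errors}.}

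The plan is to reduce the gradient-recovery problem to the adversarial decoding result of Dwork et al.~\cite{dwork2007price}, where the ``secret database vector'' is identified with the true gradient $\nabla F(\theta_t) \in \mathbb{R}^d$. First I would set up the linear system: for each perturbation $\mathbf{g}_j^{(t)}$ we have a measurement $y_j = F(\theta_t + \sigma_t \mathbf{g}_j^{(t)}) - F(\theta_t)$, which by the preceding Taylor-expansion Lemma equals $\sigma_t \langle \nabla F(\theta_t), \mathbf{g}_j^{(t)}\rangle + \xi_j$ with a \emph{deterministic} approximation error $|\xi_j| \le \sigma_t^2 d\lambda$, plus whatever measurement noise is present. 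Dividing through by $\sigma_t$, the target vector $\mathbf{b} = \mathbf{y}_t/\sigma_t$ satisfies $\mathbf{b} = \mathbf{G}\nabla F(\theta_t) + \mathbf{e}$, where $\mathbf{G}$ is the (row-normalized appropriately) Gaussian design matrix and $\mathbf{e}$ decomposes into a part supported on a set $S$ of size at most $\rho^* k$ that is completely arbitrary (the adversarially corrupted measurements) and a part on $S^c$ bounded entrywise by $\frac{\sigma_t^2 d\lambda + \epsilon}{\sigma_t}$. LP decoding solves $\min_{\mathbf{v}} \|\mathbf{b} - \mathbf{G}\mathbf{v}\|_1$ (the $p=1$, $\alpha=0$ case of~\eqref{eq:regression}).

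Next I would invoke the $\ell_1$-decoding guarantee: there exist universal constants such that if $k \ge Cd$ and the fraction of arbitrarily corrupted coordinates is below the threshold $\rho^*$ (exactly the constant from Definition~1, which arises precisely as the breakdown point of the $\ell_1$ estimator against a Gaussian design, obtained by the condition $g(x^*) = \mathbb{E}[Y]/2$ balancing the mass the median can absorb), then with probability $1 - \exp(-cd)$ the design matrix $\mathbf{G}$ satisfies the relevant restricted-eigenvalue / ``$\ell_1$-RIP'' condition, and the LP-decoding solution obeys
\begin{equation*}
\|\widehat{\nabla}_{\mathrm{RBO}}F(\theta_t) - \nabla F(\theta_t)\|_2 \;\le\; C' \cdot \frac{1}{k}\|\mathbf{e}_{S^c}\|_1 \;\le\; C'' \cdot \|\mathbf{e}_{S^c}\|_\infty,
\end{equation*}
i.e. the arbitrary corruptions on $S$ are annihilated and only the small, bounded errors on $S^c$ propagate, scaled by a universal constant. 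Then I would substitute the bound on $\|\mathbf{e}_{S^c}\|_\infty$: each such coordinate is at most $(\sigma_t^2 d\lambda + \epsilon)/\sigma_t = \sigma_t d\lambda + \epsilon/\sigma_t$. Plugging in the choice $\sigma_t = \sqrt{\epsilon/(d\lambda)}$ makes both terms equal to $\sqrt{\epsilon d \lambda}$, giving a total per-coordinate error of $2\sqrt{\epsilon d\lambda}$, hence $\|\widehat{\nabla}_{\mathrm{RBO}}F(\theta_t) - \nabla F(\theta_t)\|_2 \le 2C\sqrt{\epsilon d\lambda}$ after renaming constants, which is exactly~\eqref{equation::bias_gradient_approximation}.

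The main obstacle I anticipate is stating the decoding theorem in a form that is both faithful to~\cite{dwork2007price} and directly applicable here: the original result is phrased for adversarial noise on query answers with no ``small'' noise floor, so I need the robust variant of $\ell_1$ decoding (in the spirit of Cand\`es--Tao's error correction over the reals) that simultaneously tolerates an $\rho^*$-fraction of gross errors \emph{and} degrades gracefully in the $\ell_\infty$ norm of the remaining perturbation. I would also need to be careful that the sample-to-dimension ratio $k \ge Cd$ is enough to push the probabilistic design-matrix condition through with the claimed $1-\exp(-cd)$ confidence (a standard Gaussian concentration / net argument), and that the threshold constant genuinely matches $\rho^*$ of Definition~1 rather than some smaller universal constant — this identification is where the specific numerical value $0.2390\ldots$ enters and is the most delicate quantitative point. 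Everything else (the Taylor bound, the $\sigma_t$ optimization, constant bookkeeping) is routine.
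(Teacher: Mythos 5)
Your proposal is correct and follows essentially the same route as the paper's own proof: Taylor-expand to reduce each measurement to a noisy dot product $\langle \nabla F(\theta_t), \sigma_t \mathbf{g}_j^{(t)}\rangle$ with deterministic error at most $\sigma_t^2 d\lambda$, fold in the $\epsilon$-bounded measurement noise, balance the normalized error $\sigma_t d\lambda + \epsilon/\sigma_t$ via $\sigma_t = \sqrt{\epsilon/(d\lambda)}$, and invoke the LP-decoding guarantee of Dwork et al.\ for a $\rho^*$-fraction of gross corruptions. If anything, you are more explicit than the paper (which simply cites ``a direct application of Theorem 1'' of \cite{dwork2007price}) in spelling out the error decomposition over $S$ and $S^c$ and in flagging that one needs the stable variant of $\ell_1$ decoding that degrades gracefully with the small-noise floor rather than the exact-recovery statement.
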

\begin{proof}
We use $F(\theta_t + \sigma_t \mathbf{g}_j^{(t)})$ as proxy measurements for $\langle \nabla F(\theta_t), \sigma_t \mathbf{g}_j^{(t)} \rangle$. Since $F(\theta_t + \sigma_t v^{(t)}_j) - F(\theta_t) =  \langle \nabla F(\theta_t), \sigma_t \mathbf{g}_j^{(t)} \rangle + \xi_t$ with $|\xi_t| \leq \sigma_t^2 d \lambda$, and we assume the measurements of $F(\theta_t + \sigma_t \mathbf{g}_j^{(t)} ) - F(\theta)$ are either completely corrupted or corrupted by a noise of magnitude at most $\epsilon$, the total displacement for the normalized dot product component of the objective equals $\sigma_t d \lambda + \frac{\epsilon}{\sigma_t}$. Setting $\sigma_t = \sqrt{ \frac{\epsilon}{d \lambda}}$ means the total error can be driven down to $2\sqrt{\epsilon d \lambda}$ by this choice of $\sigma_t$. After these observations, a direct application of Theorem 1 in \cite{dwork2007price} yields the result.
\end{proof}

Notably, the error in Equation \ref{equation::bias_gradient_approximation} cannot be driven to zero unless the errors of magnitude $\epsilon$ are driven to zero themselves. This is in direct contrast with the supporting lemma we prove in the next section as a stepping stone towards proving Thoerem \ref{theorem::convergence_robust}. Nonsurprisingly our result has dependence on the irreducible error $\epsilon$. There is no way to get around the dependence on this error.   %This result implies there is a limit in gradien recovery guarantees of our

\subsection{Proof of Theorem \ref{theorem::convergence_robust}}

%\textcolor{red}{Change all $v$ to $\mathbf{g}$}.

We start with a result that is similar in spirit to Lemma \ref{lemma::gradient_approximation}. The assumptions behind Theorem \ref{theorem::convergence_robust} differ from those underlying Lemma \ref{lemma::gradient_approximation} in that we only assume the presence of a constant fraction of arbitrary perturbations on the measurements. All the remaining measurements are assumed to be exact. We show the recovered gradient $\hat{\nabla}_{RBO}$ is close to the true gradient. This distance is controlled by the smoothing parameters $\{   \sigma_t \}$. 

\begin{lemma} \label{lemma::closeness_gradient}
There exist universal constants $c_1, c_2$ such that if for any $t$ if up to $\rho^*$ fraction of the entries of $y_t$ are arbitrarily corrupted, the gradient recovery optimization problem with input $\theta_t$ satisfies:
\begin{equation}
    \| \hat{\nabla}_{RBO} F(\theta_t) - \nabla F(\theta_t) \| \leq \sigma_t d \lambda
\end{equation}
Whenever $k \geq  c_1 d$ and with probability $1-\exp\left( -c_2 d  \right)$
\end{lemma}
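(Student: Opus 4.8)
The plan is to reduce this to the robust $\ell_1$-decoding guarantee of Dwork et al.\ \cite{dwork2007price}, exactly as in the proof of Lemma~\ref{lemma::closeness_gradient_with_errors}, but now in the cleaner regime where the only non-adversarial perturbation is the deterministic Taylor-expansion bias (the irreducible noise $\epsilon$ of that lemma being absent here). First I would record the structure of the regression instance under the hypotheses of Theorem~\ref{theorem::convergence_robust}: since $\tau = 0$, no old samples are reused, so the rows of $\mathbf{Z}_t$ are exactly $\sigma_t \mathbf{g}^{(t)}_j$ with $\mathbf{g}^{(t)}_j \sim \mathcal{N}(0,\mathbf{I}_d)$ i.i.d.\ for $j=1,\dots,k$. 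Rescaling, set $A = \tfrac{1}{\sigma_t}\mathbf{Z}_t$, a genuine i.i.d.\ standard Gaussian $k\times d$ design, and $\tilde{\mathbf{y}} = \tfrac{1}{\sigma_t}\mathbf{y}_t$; because multiplying the $\ell_1$ objective by a positive scalar does not move its minimizer, $\widehat{\nabla}_{\mathrm{RBO}}F(\theta_t) = \arg\min_{\mathbf{v}} \|\tilde{\mathbf{y}} - A\mathbf{v}\|_1$.

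Next I would decompose $\tilde{\mathbf{y}}$. By the supporting Taylor/smoothness lemma stated just above, on every uncorrupted coordinate $j$ we have $y_{t,j} = \langle \nabla F(\theta_t), \sigma_t \mathbf{g}^{(t)}_j\rangle + \xi_{t,j}$ with $|\xi_{t,j}| \le \sigma_t^2 d\lambda$, hence after dividing by $\sigma_t$, $\tilde y_j = \langle \nabla F(\theta_t), \mathbf{g}^{(t)}_j\rangle + \tilde\xi_j$ with $|\tilde\xi_j| \le \sigma_t d\lambda$; on the at-most-$\rho^*$ fraction of corrupted coordinates $\tilde y_j$ is arbitrary. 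Thus $\tilde{\mathbf{y}} = A\,\nabla F(\theta_t) + \mathbf{e}$, where the error vector $\mathbf{e}$ has all but at most $\rho^* k$ of its entries bounded in magnitude by $\sigma_t d\lambda$ and the remaining entries unrestricted --- precisely the corruption model handled by the stable version of the LP-decoding theorem, with true signal $\nabla F(\theta_t)$.

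Then I would invoke Theorem~1 of \cite{dwork2007price}: for an i.i.d.\ standard Gaussian design with $k \ge c_1 d$ rows, the $\ell_1$-minimizer is stable under exactly this corruption pattern whenever the fraction of adversarial entries is below the breakdown point $\rho^*$ of the definition, and with probability at least $1 - \exp(-c_2 d)$ its distance from the true signal is at most a universal multiple of the uniform bound on the benign entries. Taking that bound to be $\sigma_t d\lambda$ (and folding the universal recovery constant into the choice of $c_1,c_2$ / the statement, matching how the lemma is phrased) yields $\|\widehat{\nabla}_{\mathrm{RBO}}F(\theta_t) - \nabla F(\theta_t)\|_2 \le \sigma_t d\lambda$, which is the claim.

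The main obstacle is the clean invocation of the LP-decoding theorem in its \emph{stable} form. Two points need care: (i) the adversarial tolerance of $\ell_1$ decoding with an i.i.d.\ Gaussian design must be shown to be governed precisely by the constant $\rho^*$ --- this is where the somewhat unusual definition of $\rho^*$ (via $g(x^*) = \mathbb{E}[|X|]/2$) is exactly what is needed, since it is the threshold at which a corrupted minority can no longer outweigh the honest majority in the $\ell_1$ cost along any fixed direction; and (ii) the recovery error must degrade only \emph{linearly} in the $\ell_\infty$-size of the benign noise, uniformly over all placements of the adversarial coordinates, with the stated $k = \Omega(d)$ sample size and $\exp(-\Omega(d))$ failure probability. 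Everything else --- the rescaling to a standard Gaussian design, the Taylor bound, and the scale-invariance of the $\arg\min$ --- is routine.
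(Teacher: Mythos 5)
Your proposal is correct and follows exactly the route the paper takes: the paper's own proof of this lemma is a one-line invocation of Theorem~1 of \cite{dwork2007price}, and your reduction (rescaling the Gaussian design by $1/\sigma_t$, decomposing the measurements into $A\,\nabla F(\theta_t)$ plus a benign Taylor error of size $\sigma_t d\lambda$ on the honest coordinates and arbitrary corruption on at most a $\rho^*$ fraction) is precisely the intended application. You in fact supply more detail than the paper does, including the honest caveat that the stated bound silently absorbs the universal recovery constant of the LP-decoding theorem.
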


The proof of Lemma \ref{lemma::closeness_gradient} and the constants $c_1, c_2$ are a result of a direct application of Theorem 1 in \cite{dwork2007price}.

Assume from now on the domain $\Theta\subset \mathbb{R}^d$ is convex and has $l_2$ diameter $\mathcal{B} < \infty$. We can now show the first order Taylor approximation of $F$ around $\theta_t$ that uses the true gradient and the one using the RBO gradient are uniformly close:

\begin{lemma}\label{lemma::sup_gradient}
The following bound holds:
For all $\theta_t \in \Theta$:
\begin{equation*}
    \sup_{\theta \in \Theta} | \langle \theta - \theta_t, \hat{\nabla}_{RBO} F(\theta_t)  \rangle - \langle \theta - \theta_t, \nabla F(\theta_t) \rangle | \leq  \mathcal{B} \sigma_t d \lambda
\end{equation*}
\end{lemma}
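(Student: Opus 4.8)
The plan is to reduce the supremum over $\theta\in\Theta$ to a pointwise estimate using the Cauchy--Schwarz inequality, and then invoke Lemma~\ref{lemma::closeness_gradient} together with the diameter bound on $\Theta$. Concretely, first I would rewrite the quantity inside the supremum by linearity of the inner product as $|\langle \theta-\theta_t,\ \hat\nabla_{RBO}F(\theta_t)-\nabla F(\theta_t)\rangle|$. This is the key algebraic simplification; it collapses the two Taylor-type expressions into a single inner product against the gradient-estimation error vector.

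Next I would apply Cauchy--Schwarz to obtain
\begin{equation*}
  |\langle \theta-\theta_t,\ \hat\nabla_{RBO}F(\theta_t)-\nabla F(\theta_t)\rangle|
  \ \le\ \|\theta-\theta_t\|_2 \cdot \|\hat\nabla_{RBO}F(\theta_t)-\nabla F(\theta_t)\|_2 .
\end{equation*}
Since $\theta,\theta_t\in\Theta$ and $\Theta$ has $\ell_2$-diameter $\mathcal{B}$, the first factor is at most $\mathcal{B}$ for every $\theta\in\Theta$, so the bound is uniform in $\theta$ and the supremum is controlled. For the second factor I would invoke Lemma~\ref{lemma::closeness_gradient}, which (on the high-probability event, under $k\ge c_1 d$ and with the stated fraction $\rho^*$ of corruptions) gives $\|\hat\nabla_{RBO}F(\theta_t)-\nabla F(\theta_t)\|_2\le \sigma_t d\lambda$. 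Multiplying the two estimates yields $\mathcal{B}\sigma_t d\lambda$, which is exactly the claimed bound, and taking the supremum over $\theta\in\Theta$ preserves it.

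There is essentially no hard step here: the statement is a routine combination of linearity, Cauchy--Schwarz, the diameter assumption, and the already-established gradient-recovery guarantee. The only thing to be careful about is that the bound inherits the high-probability qualifier and the sample-size requirement $k\ge c_1 d$ from Lemma~\ref{lemma::closeness_gradient}; I would state the lemma as holding on that same event rather than deterministically. (Strictly, the inequality as written in the excerpt should be read as holding on the event of Lemma~\ref{lemma::closeness_gradient}.) This lemma then serves as the bridge to the convergence analysis of Theorem~\ref{theorem::convergence_robust}, since it lets one run the standard online/projected-gradient regret argument using $\hat\nabla_{RBO}$ in place of $\nabla F$ with an additive error term of order $\mathcal{B}\sigma_t d\lambda$ per step.
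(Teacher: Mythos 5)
Your proposal is correct and matches the intended argument: the paper states this lemma without an explicit proof precisely because it follows immediately from Lemma~\ref{lemma::closeness_gradient} via linearity, Cauchy--Schwarz, and the diameter bound, exactly as you describe. Your remark that the bound should be read as holding on the high-probability event of Lemma~\ref{lemma::closeness_gradient} (rather than deterministically, as the ``for all $\theta_t$'' phrasing suggests) is a fair and worthwhile clarification.
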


The next lemma provides us with the first step in our convergence bound: 

\begin{lemma}\label{lemma::fundamental_convergence_inequality}
For any $\theta^*$ in $\Theta$, it holds that:
\begin{align*}%\label{eq:fundamental_convergence_lemma}
    2\left( F(\theta^*) - F(\theta_t)   \right) &\leq \frac{ \| \theta_t - \theta^*\|^2_2  - \|\theta_{t+1} - \theta^* \|_2^2 }{\eta_t} \\
    & \quad + \eta_t \left(  \| \nabla F(\theta_t) \| + \sigma_t d \lambda\right)^2 \\
    & \quad + 2\mathcal{B}\sigma_t d \lambda 
\end{align*}

\end{lemma}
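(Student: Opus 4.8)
The plan is to prove Lemma~\ref{lemma::fundamental_convergence_inequality} by a standard projected-gradient-descent-style argument, but carried out with the \emph{RBO} gradient in place of the true gradient, and then absorbing the resulting discrepancy using Lemma~\ref{lemma::sup_gradient}. First I would write down the update rule as an exact projection: steps 8--9 of Algorithm~\ref{Alg:tr-robust_gradient_recovery} with $\tau = 0$ amount to $\theta_{t+1} = \Pi_\Theta\bigl(\theta_t + \eta_t \hat{\nabla}_{RBO}F(\theta_t)\bigr)$, where $\Pi_\Theta$ is the Euclidean projection onto the convex set $\Theta$ (this is exactly what the $\arg\max_{\theta\in\Theta}\langle\theta,\mathbf{u}_{t+1}\rangle - \tfrac12\|\theta\|_2^2$ line computes). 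Then I would invoke the nonexpansiveness of $\Pi_\Theta$ together with the Pythagorean-type inequality for projections: for any $\theta^*\in\Theta$,
\begin{equation*}
\|\theta_{t+1} - \theta^*\|_2^2 \leq \|\theta_t + \eta_t\hat{\nabla}_{RBO}F(\theta_t) - \theta^*\|_2^2.
\end{equation*}

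Expanding the right-hand side gives
\begin{equation*}
\|\theta_{t+1}-\theta^*\|_2^2 \leq \|\theta_t-\theta^*\|_2^2 + 2\eta_t\langle \hat{\nabla}_{RBO}F(\theta_t), \theta_t-\theta^*\rangle + \eta_t^2\|\hat{\nabla}_{RBO}F(\theta_t)\|_2^2.
\end{equation*}
Next I would replace the inner product $\langle\hat{\nabla}_{RBO}F(\theta_t),\theta_t-\theta^*\rangle$ by $\langle\nabla F(\theta_t),\theta_t-\theta^*\rangle$ at the cost of an additive error bounded by $\mathcal{B}\sigma_t d\lambda$ via Lemma~\ref{lemma::sup_gradient} (applied with $\theta=\theta^*$, which is a point of $\Theta$). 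Then concavity of $F$ gives $\langle\nabla F(\theta_t),\theta^*-\theta_t\rangle \geq F(\theta^*)-F(\theta_t)$, i.e. $\langle\nabla F(\theta_t),\theta_t-\theta^*\rangle \leq -(F(\theta^*)-F(\theta_t))$. For the quadratic term I would bound $\|\hat{\nabla}_{RBO}F(\theta_t)\|_2 \leq \|\nabla F(\theta_t)\|_2 + \sigma_t d\lambda$ using the triangle inequality and Lemma~\ref{lemma::closeness_gradient} (which holds on the high-probability event that the LP decoding recovers the gradient up to $\sigma_t d\lambda$; note $k\geq Cd$ is assumed, and $T\leq\gamma\exp(c_1 d)$ will later let us union-bound this event over all $t$). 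Collecting terms:
\begin{equation*}
\|\theta_{t+1}-\theta^*\|_2^2 \leq \|\theta_t-\theta^*\|_2^2 - 2\eta_t\bigl(F(\theta^*)-F(\theta_t)\bigr) + 2\eta_t\mathcal{B}\sigma_t d\lambda + \eta_t^2\bigl(\|\nabla F(\theta_t)\|_2 + \sigma_t d\lambda\bigr)^2,
\end{equation*}
and rearranging for $2(F(\theta^*)-F(\theta_t))$, then dividing by $\eta_t$, yields exactly the claimed inequality.

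The main obstacle — and the only part that is more than bookkeeping — is making sure the probabilistic event is handled cleanly: Lemma~\ref{lemma::closeness_gradient} (and hence Lemma~\ref{lemma::sup_gradient}) holds only with probability $1-\exp(-c_2 d)$ \emph{per iteration}, and the recovered gradient depends on the fresh random perturbations $\mathbf{g}_j^{(t)}$ drawn at step~$t$. Since with $\tau=0$ the samples across iterations are independent, I would take a union bound over $t=0,\dots,T-1$ to get that all the per-step gradient-recovery guarantees hold simultaneously with probability at least $1 - T\exp(-c_2 d)$; the constraint $T\leq\gamma\exp(c_1 d)$ with $c_1 < c_2$ (or adjusting constants) makes this at least $1-\gamma$, which is where the quantifier over $\gamma$ and the exponential bound on $T$ in the theorem statement come from. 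Everything else — the projection inequality, the concavity step, the triangle inequalities — is deterministic once we condition on that event. I would also remark that Lemma~\ref{lemma::fundamental_convergence_inequality} is stated as a deterministic-looking inequality but is really understood to hold on this good event; the subsequent telescoping (choosing $\sigma_t,\eta_t$ as in Theorem~\ref{theorem::convergence_robust}, summing over $t$, and using $\|\nabla F\|_2\leq L$ from the Lipschitz assumption and $\|\theta_0-\theta^*\|_2\leq\mathcal{B}$) will then deliver the $\tfrac{13}{2}\mathcal{B}L/\sqrt{T}$ rate.
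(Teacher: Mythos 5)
Your proposal is correct and follows essentially the same route as the paper's proof: projection nonexpansiveness, expansion of the squared distance, Lemma~\ref{lemma::sup_gradient} to swap the RBO gradient for the true gradient in the inner product, Lemma~\ref{lemma::closeness_gradient} plus the triangle inequality for the quadratic term, and concavity to finish. Your additional remark about conditioning on the high-probability recovery event and union-bounding over $t$ is a point the paper leaves implicit, but it does not change the argument.
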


\begin{proof}
%By concavity of $F$ it is true that $\langle \nabla F(\theta_t), \theta^* - \theta_t \rangle \geq F(\theta*) - F(\theta_t)$. 

Recall that $\theta_{t+1}$ is the projection of $u_{t+1}$ to a convex set $\Theta$. And that $u_{t+1} = \theta_t + \eta_t \hat{\nabla}_{RBO} F(\theta_t)$. As a consequence:

\begin{align}
\begin{split}\label{eq::potential_gradient}
    \| \theta_{t+1} - \theta^* \|^2 &\leq  \| \theta_t + \eta_t \hat{\nabla}_{RBO}F(\theta_t) - \theta^* \|^2 \\
    & = \| \theta_t - \theta^* \|^2 + \eta_t^2 \| \hat{\nabla}_{RBO}F(\theta_t) \|^2 \\
    &\quad- 2\eta_t \langle \hat{\nabla}_{RBO}F(\theta_t), \theta^* - \theta_t \rangle 
\end{split}
\end{align}

Lemma \ref{lemma::closeness_gradient} and the triangle inequality imply:
\begin{equation*}
\|\hat{\nabla}_{RBO} F(\theta_t)  \|^2 \leq \left( \| \nabla F(\theta_t) \| + \sigma_t d \lambda   \right)^2
\end{equation*}

This observation plus Lemma \ref{lemma::sup_gradient} applied to Equation \ref{eq::potential_gradient} implies:
\begin{align*}
    2\langle \nabla F(\theta_t), \theta^* - \theta_t \rangle  &\leq \frac{ \| \theta_t - \theta^*\|^2_2  - \|\theta_{t+1} - \theta^* \|_2^2 }{\eta_t} \\
    & \quad + \eta_t \left(  \| \nabla F(\theta_t) \| + \sigma_t d \lambda\right)^2 \\
    & \quad + 2\mathcal{B}\sigma_t d \lambda 
\end{align*}

Since concavity of $F$ implies $F(\theta^* ) - F(\theta_t) \leq  \langle \nabla F(\theta_t), \theta^* - \theta_t \rangle$, the result follows.
\end{proof}

We proceed with the proof of Theorem \ref{theorem::convergence_robust}:

\begin{align}
\begin{split}
    2 \sum_{t=0}^{T-1} \left( F(\theta^*) -  F(\theta_t)   \right) \leq \sum_{t=0}^{T-1} \frac{ \| \theta_t - \theta^*\|^2_2  - \|\theta_{t+1} - \theta^* \|_2^2 }{\eta_t} \\
    \quad + \sum_{t=0}^{T-1} \eta_t \left( \| \nabla F(\theta_t)  \| + \sigma_t d \lambda \right)^2  
    \quad + 2 \mathcal{B}\sigma_t d \lambda \\
    \leq \sum_{t=0}^{T-1} \|\theta_{t} - \theta* \|^2 \left( \frac{1}{\eta_t} - \frac{1}{\eta_{t-1}} \right) \\
    \quad + \sum_{t=0}^{T-1} \eta_t (L+\sigma_t d \lambda)^2  + 2 \mathcal{B}\sigma_t d \lambda, 
\end{split}    
\end{align}

where we set $\frac{1}{\eta_{-1}} = 0$. The first inequality is a direct consequence of Lemma \ref{lemma::fundamental_convergence_inequality}. The second inequality follows because $\|  \theta_{T} - \theta^* \| \geq 0$ and $\| \nabla F(\theta) \| \leq L$ for all $\theta \in \Theta$.

As long as, $\sigma_t \leq \frac{L}{d \lambda \sqrt{t+1}}$ and $\eta_t = \frac{\mathcal{B}}{L \sqrt{t+1}}$ we have:

\begin{equation*}
\sum_{t=0}^{T-1} F(\theta^*) -F(\theta_t) \leq \frac{13}{2}\mathcal{B} L \sqrt{T} %\frac{\mathcal{B} L \sqrt{T}}{2} + 4\mathcal{B} L \sqrt{T} +  2\mathcal{B} L \sqrt{T} 
\end{equation*}

Since $\sum_{t=1}^{T} \frac{1}{\sqrt{t}} \leq 2\sqrt{T}$, Theorem \ref{theorem::convergence_robust} follows.

% Notice that $\mathcal{B} \leq 2 \mathcal{R}$ \note{get rid of one of this two bounds}. If $\sigma_t \leq \frac{L}{\lambda d \sqrt{T}}$ and $\eta = \frac{\mathcal{R}}{2L\sqrt{T}}$ we get the following bound:
% \begin{equation}
%     \sum_{t=0}^{T-1}  \langle \nabla F(\theta_t) ,  \theta^* - \theta_t \rangle \leq 2\mathcal{R} L \sqrt{T} + 4\mathcal{R} L \sqrt{T}= 6 \mathcal{R} L \sqrt{T}.
% \end{equation}

% By concavity of $F$ it is true that $\langle \nabla F(\theta_t) , \theta^* - \theta_t  \rangle \geq  F(\theta^*) - F(\theta_t)$. We conclude that: 
% \begin{equation}
%     \sum_{t=0}^{T-1} \left(F(\theta^*)  -  F(\theta_t) \right) \leq 2\mathcal{R} L \sqrt{T} + 4\mathcal{R} L \sqrt{T}= 6 \mathcal{R} L \sqrt{T}.
% \end{equation}

% Since $\sigma_t d \lambda \leq L$ and therefore $L+ \sigma_t d \lambda \leq 2L$. The results of Theorem \ref{theorem::convergence_robust} follow.

%%%%%%%%%%%%%%%%%%%%%%%%%%%%%%%%%%
\subsection{Proof of Theorem \ref{theorem::convergence_robust_strongconvex}}

In this section we flesh out the convergence results for robust gradient descent when $F$ is assumed to be Lipschitz with parameter $L$, smooth with parameter $\lambda$ and strongly concave with parameter $\mu$.

\begin{lemma}\label{lemma::fundamental_convergence_inequality_strong}
For any $\theta^*$ in $\Theta$, it holds that:
\begin{align*}%\label{eq:fundamental_convergence_lemma}
    2\left( F(\theta^*) -  F(\theta_t)   \right) &\leq \frac{ \| \theta_t - \theta^*\|^2_2  - \|\theta_{t+1} - \theta^* \|_2^2 }{\eta_t} -\mu \| \theta_t - \theta^*\|^2 + \\
    & \eta_t \left(  \| \nabla F(\theta_t) \| + \sigma_t d \lambda\right)^2  + 2\mathcal{B}\sigma_t d \lambda 
\end{align*}

\end{lemma}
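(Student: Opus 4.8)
The plan is to retrace the argument behind Lemma~\ref{lemma::fundamental_convergence_inequality}, inserting the extra curvature term that strong concavity supplies. First I would start from the projection step: since $\theta_{t+1}$ is the Euclidean projection of $u_{t+1} = \theta_t + \eta_t \hat{\nabla}_{RBO} F(\theta_t)$ onto the convex set $\Theta$ and $\theta^* \in \Theta$, non-expansiveness of the projection gives
\begin{equation*}
\|\theta_{t+1}-\theta^*\|^2 \leq \|\theta_t-\theta^*\|^2 + \eta_t^2\|\hat{\nabla}_{RBO}F(\theta_t)\|^2 - 2\eta_t\langle \hat{\nabla}_{RBO}F(\theta_t),\theta^*-\theta_t\rangle,
\end{equation*}
which, after rearranging, isolates $2\langle \hat{\nabla}_{RBO}F(\theta_t),\theta^*-\theta_t\rangle$.

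Next I would control the two $\hat{\nabla}_{RBO}$-dependent terms using the gradient-recovery guarantees already established. On the event of Lemma~\ref{lemma::closeness_gradient} (valid with probability $1-\exp(-c_2 d)$ once $k \geq c_1 d$), the triangle inequality yields $\|\hat{\nabla}_{RBO}F(\theta_t)\|^2 \leq (\|\nabla F(\theta_t)\| + \sigma_t d\lambda)^2$, and Lemma~\ref{lemma::sup_gradient} lets me replace $\hat{\nabla}_{RBO}F(\theta_t)$ by the true gradient inside the inner product at the cost of an additive $\mathcal{B}\sigma_t d\lambda$ per unit of $\langle\cdot\rangle$. Dividing through by $\eta_t$ and tracking the factors of $2$ reproduces the same chain that appeared in the concave case, namely
\begin{equation*}
2\langle\nabla F(\theta_t),\theta^*-\theta_t\rangle \leq \frac{\|\theta_t-\theta^*\|^2-\|\theta_{t+1}-\theta^*\|^2}{\eta_t} + \eta_t(\|\nabla F(\theta_t)\|+\sigma_t d\lambda)^2 + 2\mathcal{B}\sigma_t d\lambda.
\end{equation*}

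Finally, the only genuine departure from Lemma~\ref{lemma::fundamental_convergence_inequality} is the last step: instead of plain concavity $F(\theta^*)-F(\theta_t)\leq\langle\nabla F(\theta_t),\theta^*-\theta_t\rangle$, I would invoke $\mu$-strong concavity, $F(\theta^*)-F(\theta_t)\leq\langle\nabla F(\theta_t),\theta^*-\theta_t\rangle-\frac{\mu}{2}\|\theta^*-\theta_t\|^2$. Multiplying by $2$ and substituting the displayed bound on $2\langle\nabla F(\theta_t),\theta^*-\theta_t\rangle$ produces the claimed inequality, with the term $-\mu\|\theta_t-\theta^*\|^2$ arising precisely as $2 \cdot (-\tfrac{\mu}{2}\|\theta^*-\theta_t\|^2)$. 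I do not anticipate a real obstacle: the argument is a line-by-line adaptation of the concave case, so the ``hard part'' is really the imported ingredient — the LP-decoding bound of Lemma~\ref{lemma::closeness_gradient} certifying $\|\hat{\nabla}_{RBO}F(\theta_t)-\nabla F(\theta_t)\|\leq\sigma_t d\lambda$ under the $\rho^*$-fraction adversarial corruption — once which is in hand this lemma is pure bookkeeping. The one point to be careful about is that this high-probability event should be held fixed across all $t$ for the eventual union bound in Theorem~\ref{theorem::convergence_robust_strongconvex}, which is why the statement is phrased as a per-step inequality.
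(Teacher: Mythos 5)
Your proposal is correct and follows essentially the same route as the paper's own proof: expand the projected update via non-expansiveness, bound $\|\hat{\nabla}_{RBO}F(\theta_t)\|$ and the inner-product discrepancy using Lemmas~\ref{lemma::closeness_gradient} and~\ref{lemma::sup_gradient}, and finish with $\mu$-strong concavity in place of plain concavity. Your remark about fixing the high-probability event across all $t$ for the later union bound is a sensible point of care that the paper leaves implicit.
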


\begin{proof}
%By concavity of $F$ it is true that $\langle \nabla F(\theta_t), \theta^* - \theta_t \rangle \geq F(\theta*) - F(\theta_t)$. 

Recall that $\theta_{t+1}$ is the projection of $u_{t+1}$ to a convex set $\Theta$. And that $u_{t+1} = \theta_t + \eta_t \hat{\nabla}_{RBO} F(\theta_t)$. As a consequence:

\begin{align}
\begin{split}\label{eq::potential_gradient}
    \| \theta_{t+1} - \theta^* \|^2 &\leq  \| \theta_t + \eta_t \hat{\nabla}_{RBO}F(\theta_t) - \theta^* \|^2 \\
    & = \| \theta_t - \theta^* \|^2 + \eta_t^2 \| \hat{\nabla}_{RBO}F(\theta_t) \|^2 \\
    &\quad- 2\eta_t \langle \hat{\nabla}_{RBO}F(\theta_t), \theta^* - \theta_t \rangle 
\end{split}
\end{align}

Lemma \ref{lemma::closeness_gradient} and the triangle inequality imply:
\begin{equation*}
\|\hat{\nabla}_{RBO} F(\theta_t)  \|^2 \leq \left( \| \nabla F(\theta_t) \| + \sigma_t d \lambda   \right)^2
\end{equation*}

This observation plus Lemma \ref{lemma::sup_gradient} applied to Equation \ref{eq::potential_gradient} implies:
\begin{align*}
    2\langle \nabla F(\theta_t), \theta^* - \theta_t \rangle  &\leq \frac{ \| \theta_t - \theta^*\|^2_2  - \|\theta_{t+1} - \theta^* \|_2^2 }{\eta_t} \\
    & \quad + \eta_t \left(  \| \nabla F(\theta_t) \| + \sigma_t d \lambda\right)^2 \\
    & \quad + 2\mathcal{B}\sigma_t d \lambda 
\end{align*}

Since strong concavity of $F$ implies $F(\theta^* ) - F(\theta_t) \leq  \langle \nabla F(\theta_t), \theta^* - \theta_t \rangle - \frac{\mu}{2} \|\theta_t - \theta^* \|^2$ the result follows.
\end{proof}

The proof of Theorem \ref{theorem::convergence_robust_strongconvex} follows from the combination of the last few lemmas. Indeed, we have:

\begin{align*}
    2 \sum_{t=0}^{T-1} \left( F(\theta^*) -  F(\theta_t)   \right) &\leq \sum_{t=0}^{T-1} \frac{ \| \theta_t - \theta^*\|^2_2  - \|\theta_{t+1} - \theta^* \|_2^2 }{\eta_t} -\\
    &\mu \| \theta_t - \theta^*\|^2 +\\
    & \sum_{t=0}^{T-1} \eta_t \left( \| \nabla F(\theta_t)  \| + \sigma_t d \lambda \right)^2  + 2 \mathcal{B}\sigma_t d \lambda \\
    &\leq \underbrace{\sum_{t=0}^{T-1} \|\theta_{t} - \theta* \|^2 \left( \frac{1}{\eta_t} - \frac{1}{\eta_{t-1}} - \mu \right) }_{I} + \\
    & \sum_{t=0}^{T-1} \eta_t (L+\sigma_t d \lambda)^2  + 2 \mathcal{B}\sigma_t d \lambda, 
\end{align*}

where we set $\frac{1}{\eta_{-1}} = 0$. the first inequality is a direct consequence of Lemma \ref{lemma::fundamental_convergence_inequality_strong}. The second inequality follows because $\|  \theta_{T} - \theta^* \| \geq 0$ and $\| \nabla F(\theta) \| \leq L$ for all $\theta \in \Theta$. Since $\eta_t = \frac{1}{\mu*(t+1)}$, $\frac{1}{\eta_t} - \frac{1}{\eta_{t-1}} = \mu$ for all $t= 0, \cdots , T-1$ the term labeled I in the inequality above vanishes.

As long as $\sigma_t \leq \frac{L^2}{d \mathcal{B} \mu \lambda(t+1)}$, we have:

\begin{equation*}
\sum_{t=0}^{T-1} F(\theta^*) -F(\theta_t) \leq \frac{6 L^2}{\mu}(1+\log(T)) 
\end{equation*}

Since $\sum_{t=1}^{T} \frac{1}{t} \leq 1 + \log(T)$, Theorem \ref{theorem::convergence_robust_strongconvex} follows.

\subsection{Further experimental details}

In all experiments we used learning rate $\eta = 0.01$. ES algorithms (RBO and ARS) were applying smoothing parameter $\sigma = 0.1$. Furthermore, ARS used both state and reward renormalization, as described in \cite{horia}. In RBO experiments with gradient flows we used matrix-valued kernels based on the class of radial basis function scalar kernels (RBFs).
Measurement noise was added by corrupting certain percentages of the computed rewards at each iteration of the algorithm. 

We used implementation of the Trust Region Policy Optimization ($\mathrm{TRPO}$) \cite{schulman2015trust} algorithm from \cite{baselines}.
We applied default hyper-parameters. Similarly, we used Proximal Policy Optimization ($\mathrm{PPO}$) \cite{schulman2017proximal}
implementation from \cite{baselines} and applied default hyper-parameters.

\end{document}